\documentclass[twoside,11pt]{article}
\usepackage{jair, theapa, rawfonts,comment,amsmath}

\jairheading{80}{2024}{12-15}{02/2024}{08/2024}
\ShortHeadings{Uncertainty as a Fairness Measure}
{Kuzucu, Cheong, Gunes \& Kalkan}
\firstpageno{1}

\usepackage{graphicx}
\usepackage{subfigure}
\usepackage{array,booktabs,arydshln,xcolor}
\newcommand{\Unc}{\mathcal{U}}
\newcommand{\Uncp}{\mathcal{U}_p}
\newcommand{\Unca}{\mathcal{U}_a}
\newcommand{\Unce}{\mathcal{U}_e}
\newcommand{\PMeasure}{\mathcal{M}}
\newcommand{\FMeasure}{\mathcal{F}}
\newcommand{\highlight}[1]{\textbf{\textcolor{red}{#1}}}

\usepackage{booktabs}
\usepackage{lipsum}
\usepackage{bbm}
\usepackage{arydshln}
\usepackage{amsmath}
\usepackage{amsthm}
\usepackage{amssymb}
\usepackage{mathtools}
\usepackage{comment}
\newtheorem{definition}{Definition} [section]
\newtheorem{proposition}{Proposition}[section]

\DeclareMathOperator*{\argmax}{arg\,max}

\usepackage{amsmath,amssymb,amsfonts}
\usepackage{algorithmic}
\usepackage{graphicx}
\usepackage{textcomp}
\usepackage{xcolor}
\usepackage{comment}
\usepackage{centernot} 


\begin{document}

\title{Uncertainty as a Fairness Measure}

\author{\name Selim Kuzucu 
\email selim.kuzucu@metu.edu.tr \\
       \addr Department of Computer Engineering\\ Middle East Technical University \\ 
       06800 Ankara, Turkiye
       \AND
       \name Jiaee Cheong 
       \email jc2208@cam.ac.uk \\
       \addr 
       Department of Computer Science \\
       University of Cambridge \\
       Cambridge, CB3 0FD, United Kingdom \\
       The Alan Turing Institute \\
       London, NW1 2DB, United Kingdom
       \AND
        \name Hatice Gunes \email hg410@cam.ac.uk\\
         \addr 
         Department of Computer Science \\
         University of Cambridge\\ 
         Cambridge, CB3 0FD, United Kingdom
        \AND
       \name Sinan Kalkan \email skalkan@metu.edu.tr \\
       \addr Department of Computer Engineering \& \\
       ROMER Robotics-AI Center\\ 
       Middle East Technical University \\
       06800 Ankara, Turkiye
}

\maketitle

\begin{abstract}
Unfair predictions of machine learning (ML) models impede their broad acceptance in real-world settings. Tackling this arduous challenge first necessitates defining what it means for an ML model to be fair. This has been addressed by the ML community with various measures of fairness that depend on the prediction outcomes of the ML models, either at the group-level or the individual-level. These fairness measures are limited in that they utilize point predictions, neglecting their variances, or uncertainties, making them susceptible to noise, missingness and shifts in data. In this paper, we first show that a ML model may appear to be fair with existing point-based fairness measures but biased against a demographic group in terms of prediction uncertainties. Then, we introduce new fairness measures based on different types of uncertainties, namely, aleatoric uncertainty and epistemic uncertainty. 
We demonstrate on many datasets that (i) our uncertainty-based measures are complementary to existing measures of fairness, and (ii) they provide more insights about the underlying issues leading to bias. 
%
\end{abstract}


\section{Introduction}

An impedance to the wide-spread use of machine learning (ML) approaches is the bias present in their predictions against certain demographic groups. The severity and extent of this matter have been considerably investigated for different applications, such as gender recognition \cite{buolamwini2018gender}, emotion or expression recognition \cite{domnich2021responsible,xu2020investigating,chen2021understanding} and mental health prediction \cite{cheong2023towards} etc. 

\begin{figure}[hbt!]
    \centering
    \includegraphics[width=0.7\columnwidth]{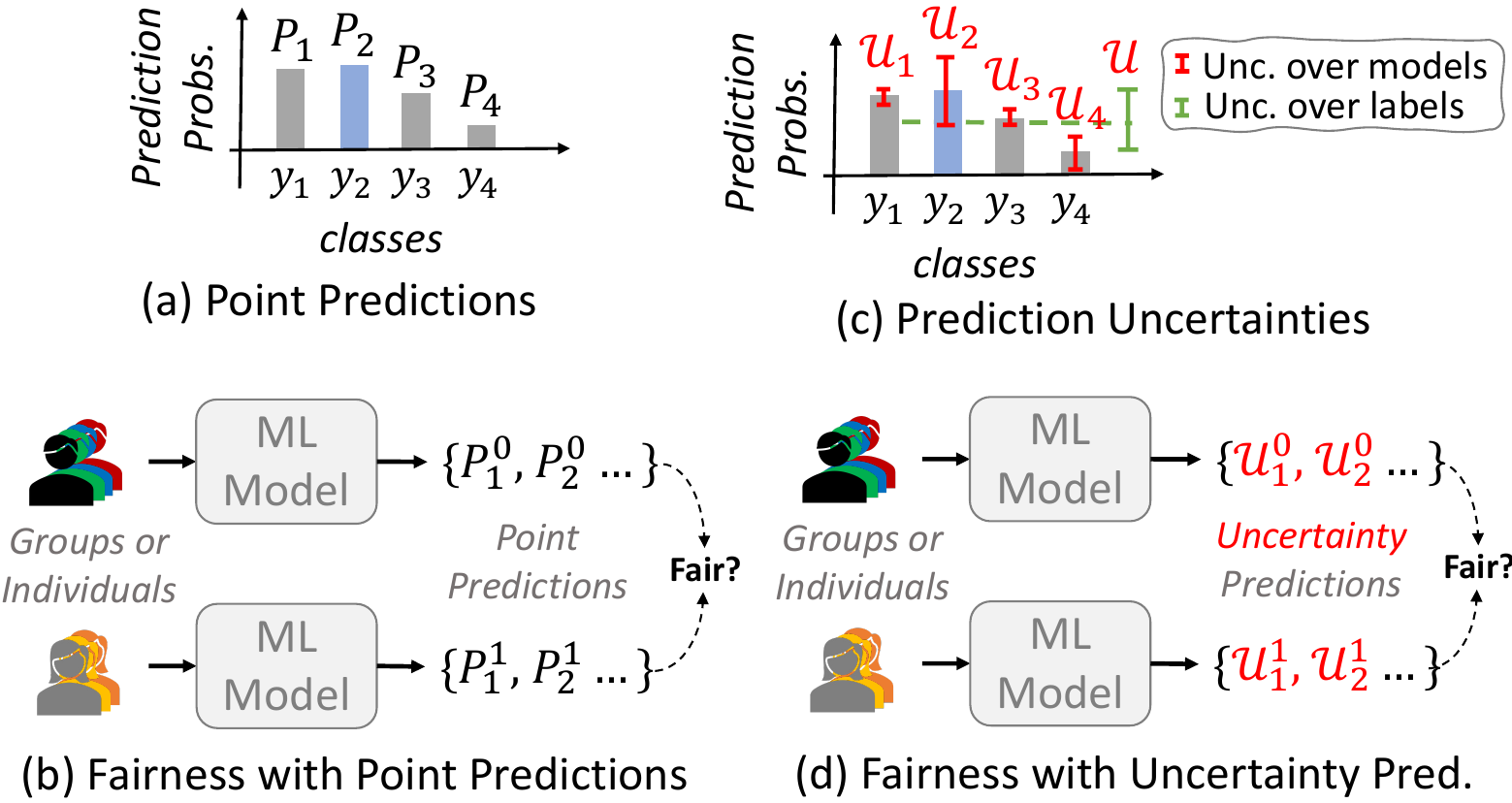}
    \caption{Existing fairness measures utilize point predictions for quantifying fairness, which ignores the uncertainty (variance) of the predictions \textbf{(a-b)}. We fill this gap by using uncertainty instead for measuring fairness \textbf{(c-d)}.}
    \label{fig:teaser}
    \vspace{-0.5cm}
\end{figure}

It has been identified in the literature that fairness is a multi-faceted concept, which has led to different notions and definitions of fairness \cite{garg2020fairness,verma2018fairness,castelnovo2022clarification,dwork2012fairness,mehrabi2021survey}. For example, a model can be considered fair at a group-level (called \textit{group fairness}) if its predictions are the same for the different demographic groups (a.k.a., \textit{Statistical Parity} -- \cite{dwork2012fairness,mehrabi2021survey,garg2020fairness,verma2018fairness}) or if its false negative rates are the same (a.k.a., \textit{Equal Opportunity} -- \cite{hardt2016equality}). 
Alternatively, a model can be evaluated for fairness at the level of individuals (called \textit{individual fairness}) by comparing an individual's predictions to similar individuals \cite{dwork2012fairness} or to a counterfactual version of the individual (called \textit{counterfactual fairness} -- \cite{kusner2017counterfactual,cheong2022counterfactual}).

Despite advances in fairness quantification measures using point predictions (Fig. \ref{fig:teaser}(a,b)) and bias mitigation methods, the utility of such measures or methods are often limited when exposed to real-world data.
This is because (\textbf{P1}) first, they often do not account for real-world problems such as missing data \cite{goel2021importance}, biased labeling \cite{jiang2020identifying} and domain or distribution shifts \cite{chen2022fairness}.
(\textbf{P2}) Second, they are susceptible to fairness gerrymandering.
For instance, depending on how a group is defined, a key challenge with existing statistical-parity point-based fairness measures is that it is implausible to ensure they hold for every subgroup of the population. 
Any classifier can be deemed unfair to the subgroup of individuals defined ex-post as the set of samples it misclassified \cite{kearns2018preventing}. 
(\textbf{P3}) Third, recent works have demonstrated how traditional bias mitigation methods do not necessarily lead to fairer outcomes as measured using traditional parity-based measures nor do they shed light on the source of bias. For instance, larger or more balanced datasets did not mitigate the embedded disparities in \textit{real-world tabular datasets} \cite{ding2021retiring} and balancing samples across gender did not produce fairer predictions for females \cite{cheong2023towards}.
We propose addressing these challenges by measuring fairness using prediction uncertainties (Fig. \ref{fig:teaser}(c,d)). 

\subsection{Uncertainty-based Fairness for Social Impact.} An uncertainty-based definition of fairness has the potential of addressing the aforementioned drawbacks (P1-P3):

\begin{enumerate}
    \item \textbf{Addressing P1}: Point predictions calculated using $P(Y | X)$ are often unreliable \cite{guo2017calibration,baltaci2023class,FocalLoss_Calibration} and uninformative in real-world problems with missing data, labeling or data noise and distribution shifts.  Uncertainty-based fairness addresses {P1} by quantifying the level of unreliability present to provide practitioners with an indication of the potential source of underlying bias present. As we will show in our paper, (i) machine learning models have different prediction uncertainties for different demographic groups and (ii) these differences can provide useful insights, e.g., about a lack of data or a presence of noise affecting one demographic group more than others, which are fundamental to fairness.

    \item \textbf{Addressing P2}: Prediction uncertainties quantify variance over multiple predictions for the same input, which make them less susceptible or less vulnerable towards manipulation. They represent the inherent uncertainty about the model and the data, which is largely immutable for a given model and a dataset.

    \item \textbf{Addressing P3}: Quantification of different types of uncertainty by definition provides insights about the underlying issues with the data and the model, which can shed light on e.g. when adding more data does not necessarily lead to fairer outcomes.
   
\end{enumerate}

\subsection{Contributions.} In summary, our main contributions are:

\begin{itemize}
    \item We \textbf{introduce uncertainty-based fairness measures at the group and individual-level}. To the best of our knowledge, our paper is the first to use uncertainty as a fairness measure. 

    \item We \textbf{prove that an uncertainty-based fairness measure is complementary to point-based measure}s, suggesting that both uncertainty and point predictions should be taken into account when analyzing fairness of models.

    \item We show on many datasets that (i) uncertainty fairness can vary significantly across demographic groups and (ii) it \textbf{provides insight about the sources of bias}.
\end{itemize}

\section{Related Work}

\subsection{Fair ML}

The seminal work of \cite{buolamwini2018gender} and the follow-up studies \cite{domnich2021responsible,xu2020investigating,chen2021understanding,cheong2023causal} have exposed significant bias present in many applications of ML models. To address such biases and obtain fairer ML models, the ML community have proposed a plenitude of pre-processing, in-processing or post-processing strategies with promising outcomes -- see \cite{barocas2017fairness,mehrabi2021survey,cheong2021hitchhiker} for surveys.

\subsection{Fairness Measures}

Fairness has multiple facets, which have been recognized by the ML community with different notions and measures of fairness. One prominent notion of fairness is \textit{group fairness}, which pertains to comparing a model's predictions across different demographic groups. Statistical Parity \cite{dwork2012fairness,mehrabi2021survey,garg2020fairness}, Equal Opportunity \cite{hardt2016equality}, and Equalized Odds \cite{hardt2016equality} are commonly used measures of group fairness.
Alternatively, ML model predictions can be evaluated for \textit{individual fairness} \cite{dwork2012fairness}. Such fairness can be measured e.g. by comparing an individual's predictions with those of similar individuals \cite{dwork2012fairness} or with those of a counterfactual version of the individual \cite{kusner2017counterfactual}.

Wang et al. \cite{wang2023aleatoric} study algorithmic discrimination with two measures of group fairness that are relevant to our fairness measures: Aleatoric discrimination, for inherent biases in data, and epistemic discrimination, for model or algorithmic biases. These discrimination measures are based on the gap between the performance of a model and the fairness Pareto frontier for that model. The fairness Pareto frontier represents the best achievable performance for a certain fairness constraint. The gap between this frontier and the 100\% performance would characterize irreducible (aleatoric) discrimination of the model whereas the gap between the frontier and the current model's performance would represent reducible (epistemic) discrimination. Although these measures are valuable, obtaining the fairness Pareto frontier requires solving a sophisticated optimization problem. Wang et al. address this issue by making simplifications about the decision boundaries or the machine learning model, which limit the applicability of their approach in practice. Moreover, their approach is limited to only measuring group-level discrimination.

{
\subsection{Bayesian Neural Networks (BNNs)}

Bayesian Neural Networks \cite{mackay1992practical,neal1995bayesian} operate by placing a prior distribution over the weights of a neural network, such that each weight is represented by a distribution parameterized by a mean and a standard deviation: $\omega_i = (\mu_i, \sigma_i)$.
In addition to being robust against over-fitting \cite{gal2016uncertainty,blundell2015weight}, BNNs are known to work well with small datasets and propagate reliable uncertainty estimates as they bring a natural framework to estimate the first two moments of the predictive distribution \cite{gal2016uncertainty}.

Against these desirable properties, one key challenge with BNNs is to perform inference.
Inference arise as a challenge due to the need to find the most probable weights (in the form of distributions) that have generated the data.
Specifically, when we attempt to apply the Bayes' Theorem to obtain the true posterior on the weights, we often fail to do so as marginalizing the prior on the weights does not have an analytical solution for the complex cases \cite{gal2016uncertainty}.
Owing to this issue, there is a need for approximating the posterior, which is often achieved by utilizing variational Bayesian approximation techniques such as Monte Carlo (MC) sampling.
In practice, MC sampling is used not only for optimizing the BNNs (Section \ref{sect:training_details}) but also for uncertainty quantification (Section \ref{sect:uncertainty_and_quantification}).
%

\subsection{BNNs and Uncertainty Quantification}
Modern decision making systems should possess an awareness of unknowns and propagate this information to the people at the end of the decision making pipelines.
In recent literature, this goal is aimed to be achieved by producing reliable uncertainty estimates, commonly referred to as uncertainty quantification \cite{gal2016dropout,kendall2017uncertainties,mukhoti2023ddu,van2020duq}.

In Bayesian modeling, it is possible to disentangle the overall predictive uncertainty into two unique components: epistemic uncertainty to account for the lack of data and aleatoric uncertainty to account for any irreducible uncertainty associated with the data \cite{kendall2017uncertainties}.
Epistemic uncertainty is modeled by capturing how much the weights vary given a set of data based on a prior placed on the weights whereas aleatoric uncertainty is modeled by quantifying the variance of the distribution placed over the outputs of the model \cite{kendall2017uncertainties,kwon2020uncertainty}.
Due to the aforementioned intractability issue, both of these uncertainties are commonly captured using MC sampling \cite{gal2016dropout,kendall2017uncertainties,kwon2020uncertainty}.
Recently, by improving the methodology introduced in \cite{kendall2017uncertainties}, 
Kwon \textit{et al.}
\shortcite{kwon2020uncertainty} proposed a novel method to quantify these two components without the need to optimize for a separate variance parameter, which we utilize in our work and formally describe in Section \ref{sect:uncertainty_and_quantification}.

\subsection{Uncertainty and Fairness}

The relationship between the uncertainty and fairness has been subject to numerous recent studies, such as \cite{mehta2023evaluating,tahir2023fairness,kaiser2022uncertainty}.
For example, \cite{mehta2023evaluating} have shown that mitigating bias has an adverse affect on the (predictive) uncertainty of estimations. Furthermore, \cite{tahir2023fairness} with \cite{kaiser2022uncertainty} utilize aleatoric uncertainty as part of their bias mitigation strategy. Our work is also distinct from existing work that attempts to quantify the uncertainty of a fairness measure \cite{roy2023fairness} or the bias present \cite{ethayarajh2020your}. 
%
In addition, it is well-known that ML models are generally under- or over-confident \cite{guo2017calibration,FocalLoss_Calibration} or unreliable under noise \cite{kendall2017uncertainties}. To illustrate that this may also be the case in datasets which are commonly used in fairness analysis, we plot the prediction confidence of the BNN classifier on the COMPAS dataset in Fig. \ref{fig:compas_calibration}. The plot shows that the model is under or over-confident about its predictions and that there is a so-called calibration gap.
\begin{figure}[hbt!]
  \centering
  \includegraphics[width=0.5\columnwidth]{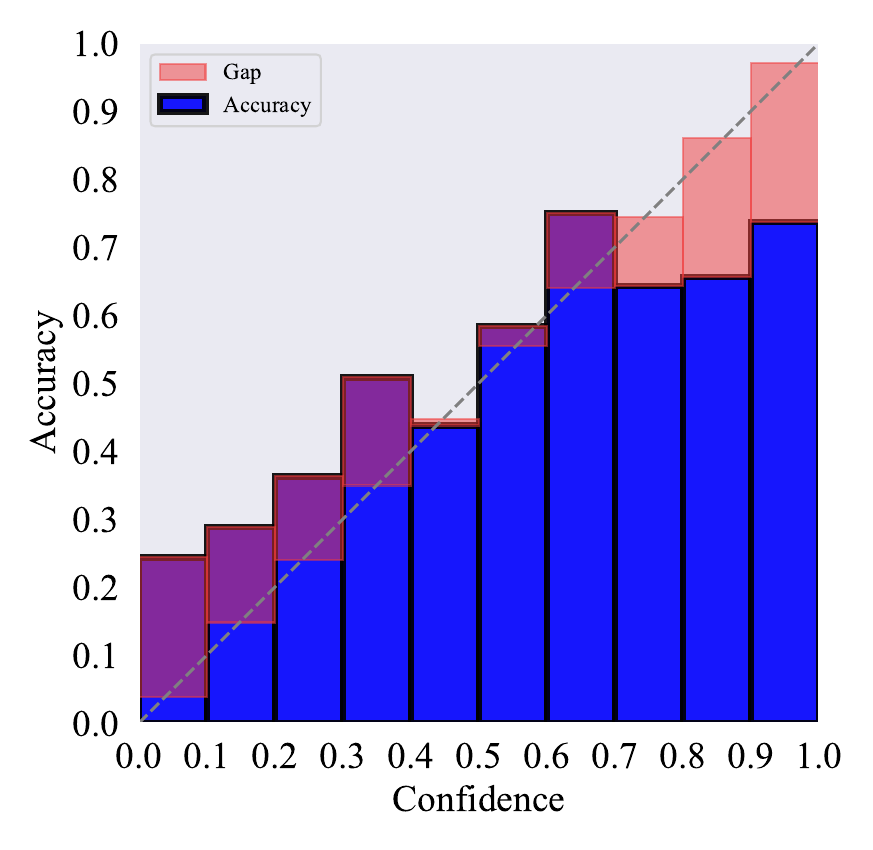}
  \caption{\textbf{Illustration of Under- or Over-Confidence:} An example illustrating under- or over-confidence of an ML model's predictions. The diagram is calculated for a Bayesian NN classifier on the COMPAS Dataset, a dataset frequently used in ML fairness research.}
  \label{fig:compas_calibration}
\end{figure}

\subsection{Comparative Summary}
As discussed, existing fairness measures have only considered point predictions, which provide an incomplete view about the quality of a model's predictions. To the best of our knowledge, we are the first to address this gap by proposing uncertainty as a fairness measure. We prove that the introduced uncertainty measure is \textit{complementary} to the point-based fairness measures. We showcase that (i) point-based fairness and uncertainty-based fairness can be complementary, and (ii) uncertainty fairness can provide insight about the sources of bias on several datasets.

\section{Preliminaries and Background}
In this section, we introduce some preliminary notation and background knowledge to aid subsequent understanding of the paper.

\subsection{Notation}

Following the setting and notation in the literature \cite{verma2018definitions,castelnovo2022clarification}, we assume a binary classification problem with a dataset $D$ of $X$, $Y$ and $G$ where $X$ denotes features describing an individual, $Y \in \{0, 1\}$ is the classification target, and $G \in \{0, 1\}$ is the majority group indicator, with $G=0$ denoting the minority. 
Solving the classification problem involves finding a mapping 
$\hat{Y} = f(X; \theta) \in \{0, 1\}$ with parameters $\theta$. We use $P(Y = y_i | X = \mathbf{x}_i)$ to denote the predicted probability for the correct class $y_i$ for sample $X = \mathbf{x}_i$, and $\hat{Y}=\hat{y}_i \leftarrow \argmax_{c} P(Y = c | X = \mathbf{x}_i)$ 
to denote the predicted class.

\subsection{Measuring Group Fairness}
\label{sect:measuring_fairness}

A ML model can be considered fair if a chosen performance measure for a specific task is the same across different groups \cite{garg2020fairness,verma2018fairness}. More formally, for a predictor $\hat{Y} = f(\ \cdot\ ; \theta)$ to be considered fair with respect to a demographic group attribute $G$, the following equality should be met for a given performance measure $\mathcal{M}$, e.g., true positive rate:
{
\begin{equation}
 \textrm{Fair}(f; \mathcal{M}, D) \equiv \mathcal{M}(D, f, G=0) = \mathcal{M}(D, f, G=1).
\end{equation}
}
Existing work exploring different performance measures for $\mathcal{M}$ has shown that each entails a different notion of fairness. For example: 

\paragraph{Statistical Parity, or Demographic Parity} \cite{dwork2012fairness,mehrabi2021survey,garg2020fairness,verma2018fairness}:  Compares model's prediction probabilities for the positive class ($\hat{Y} = 1$) across different groups (with $\mathcal{M}(D, f, G) \equiv P(\hat{Y}=1 | G)$):
\begin{equation}
            P(\hat{Y}=1 | G=0) = P(\hat{Y}=1 | G=1).
\end{equation}

\paragraph{Equal Opportunity} \cite{hardt2016equality}: Compares model's false negative rates, i.e., prediction probabilities for the negative class ($\hat{Y} = 0$) for the known positive class ($Y = 1$):
{
\begin{equation}
            P(\hat{Y}=0 | Y=1, G=0) = P(\hat{Y}=0 | Y=1, G=1),
\end{equation}
}
where $\mathcal{M}(D, f, G) \equiv P(\hat{Y}=0 | Y = 1, G)$.

\paragraph{Equalised Odds} \cite{hardt2016equality}: Compares model's prediction probabilities for the positive class ($\hat{Y} = 1$) for different ground truth classes ($Y = 1$ and $Y = 0$):
{
\begin{equation}
            P(\hat{Y}=1 | Y=y, G=0) = P(\hat{Y}=1 | Y=y, G=1),
\end{equation}
}
where $y\in \{0, 1\}$, and we've taken $\mathcal{M}(D, f, G) \equiv P(\hat{Y}=1 | Y = y, G)$.

\subsection{Measuring Individual Fairness}

Dwork \textit{et al.}
\shortcite{dwork2012fairness} defined individual fairness based on a ``similar individuals should have similar predictions'' principle:
\begin{equation}\label{eqn:consistency}
    d_y(f(\mathbf{x}_1), f(\mathbf{x}_2)) \le Ld_x(\mathbf{x}_1, \mathbf{x}_2), \quad \forall \mathbf{x}_1, \mathbf{x}_2 \in \mathcal{X}.
\end{equation}
The above notion assumes suitable distance metrics $d_y(\cdot, \cdot)$ and $d_x(\cdot, \cdot)$ to be available for the predictions and the inputs respectively. The literature has used point predictions to quantify this notion of fairness e.g. by using a consistency measure \cite{zemel2013learning,mukherjee2020two}:
\begin{equation}
   \mathcal{F}^{indv}_{\hat{y}}(X=\mathbf{x}_i) = 1-\left| \hat{y}_i - \frac{1}{k} \sum_{\mathbf{x}_j \in k\textrm{NN}(\textbf{x}_i)} \hat{y}_j \right|,
\end{equation}
where $k\textrm{NN}(\textbf{x}_i)$ denotes the $k$-nearest neighbours of $\mathbf{x}_i$.
\section{Methodology}
\label{sect:uncertainty_fairness}

We first describe how we quantify uncertainty and then introduce the fairness measures.
\subsection{Quantifying Uncertainty}
\label{sect:uncertainty_and_quantification}

ML models tend to be under- or over-confident about their predictions and unaware of distribution shift, adversarial attacks or noise in data \cite{abdar2021review,gawlikowski2021survey,cetinkaya2024ranked}. Quantifying the variance of a model's predictions, i.e., \textit{predictive uncertainty}, facilitates awareness of such hindrances with respect to the data. Predictive uncertainty has two components, reflecting the two different ways to define a variance over predictions:

\begin{itemize}
    \item \textbf{Epistemic or model uncertainty} is measured over different models. Epistemic uncertainty reflects the lack of knowledge about the current input and can be reduced by providing more training data, i.e., more knowledge.
    \item \textbf{Aleatoric or data uncertainty} is measured over classes. Aleatoric uncertainty reflects the irreducible noise in the data.
\end{itemize}

We use Bayesian Neural Networks (BNNs) to obtain uncertainty estimates as described in \cite{blundell2015weight} since BNNs provide reliable uncertainty estimations.
A BNN
defines a distribution over each weight in the model: $\omega_i = (\mu_i, \sigma_i)$, which enables sampling different weights and making multiple predictions for the same input.
With such a model, predictive uncertainty for a sample $\mathbf{x}$ with label $y$ can be quantified as follows \cite{kwon2020uncertainty,shridhar2019bcnn}:
{
\begin{equation}
     \underbrace{\underbrace{\frac{1}{M} \sum_{m=1}^M (P_m - \Bar{P})^T  (P_m - \Bar{P})}_{\text{Epistemic unc. ($\Unce$)}} + \underbrace{\frac{1}{M} \sum_{m=1}^M diag(P_m) - P_m^T \cdot P_m}_{\text{Aleatoric unc. ($\Unca$)}}}_{\text{Predictive uncertainty ($\Uncp$)}},
\end{equation}
}
{\noindent} where $\Bar{P} = \frac{1}{M} \sum_{m=1}^M P_m$ and $P_m = P(Y|X=\mathbf{x})$ of the $m^{th}$ Monte Carlo sample with $M$ being the number of Monte Carlo samples.
To obtain group-wise uncertainty estimations, we aggregate the quantified uncertainty values for the samples of that group by averaging.

\subsection{Uncertainty-based Group Fairness Measures}
We now introduce our novel fairness notion based on averaged predictive uncertainty over groups where each group is defined by a set of sensitive attributes. For this, we use the uncertainty types and their quantification as outlined in Section \ref{sect:uncertainty_and_quantification} and extend the definition of fairness in Section \ref{sect:measuring_fairness}.  

\begin{definition}[\textsc{Uncertainty-Fairness Measure}]

A model is fair if its uncertainties are the same across different groups. More formally, extending the definition in Section \ref{sect:measuring_fairness}
\begin{equation}
\textrm{Fair}(f; \Unc, D) \equiv \Unc(D, f, G = 0) = \Unc(D, f, G=1),
\end{equation}
where $\Unc$ is an uncertainty measure, e.g., predictive uncertainty ($\Uncp$), epistemic uncertainty ($\Unce$), or aleatoric uncertainty ($\Unca$) as introduced in Section \ref{sect:uncertainty_and_quantification}.
\end{definition}

\begin{proposition}[\textsc{Independence of Uncertainty Fairness}]
Consider a predictor $f(\cdot; \theta)$ with point-predictions $\{\hat{y}_i\}_i$ (and associated probabilities $\{P(\hat{y}_i | \mathbf{x}_i)\}_i$) and uncertainties $\{\Unc_i\}_i$ (namely, predictive, epistemic and aleatoric). Then, uncertainty fairness $\textrm{Fair}(f; \Unc, D)$ is independent to the conventional point-measure based fairness $\textrm{Fair}(f; \mathcal{M}, D)$. More formally:
\begin{itemize}
    \item $\textrm{Fair}(f; \mathcal{M}, D) \centernot\implies \textrm{Fair}(f; \Unc, D) $.
    \item $\textrm{Fair}(f; \Unc, D) \centernot\implies \textrm{Fair}(f; \mathcal{M}, D)$.
\end{itemize}
$\textrm{Fair}(f; \Unc, D)$ does not imply $\textrm{Fair}(f; \mathcal{M}, D)$ or vice versa.

\end{proposition}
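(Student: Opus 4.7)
The plan is to establish each non-implication by an explicit counterexample, exploiting the fact that the mean posterior predictive $\Bar{P}$ (which governs any point-based $\PMeasure$) and the Monte Carlo sample dispersion (which governs $\Unce$, $\Unca$, and hence $\Uncp$) are distinct functionals of the posterior predictive distribution. Decoupling these two quantities is what makes independence straightforward to prove.

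For $\textrm{Fair}(f;\PMeasure,D)\centernot\implies\textrm{Fair}(f;\Unc,D)$, I would construct a toy two-group dataset together with a BNN whose Monte Carlo outputs on group $G=0$ concentrate tightly, say $P_m \approx (0.05, 0.95)$ for every $m$, while on $G=1$ they oscillate between $(0.45,0.55)$ and $(0.55,0.45)$. In both cases the argmax class is $1$, so statistical parity $P(\hat{Y}=1\mid G=0)=P(\hat{Y}=1\mid G=1)$ holds; by matching $Y$ appropriately, equal opportunity and equalised odds can be enforced simultaneously. However, the sample covariance $\frac{1}{M}\sum_m (P_m-\Bar{P})^T(P_m-\Bar{P})$ is far larger on $G=1$ than on $G=0$, so $\textrm{Fair}(f;\Unce,D)$ fails. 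The same construction breaks $\textrm{Fair}(f;\Unca,D)$ --- since in the binary case the aleatoric diagonal reduces to $p(1-p)$, which is maximised near $p=1/2$ --- and hence $\textrm{Fair}(f;\Uncp,D)$ as well.

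For the reverse $\textrm{Fair}(f;\Unc,D)\centernot\implies\textrm{Fair}(f;\PMeasure,D)$, I would fix the MC dispersion pattern across the two groups but translate their means. Concretely, let MC samples on $G=0$ alternate between $(0.05,0.95)$ and $(0.15,0.85)$ (so $\Bar{P}=(0.1,0.9)$) and those on $G=1$ alternate between $(0.85,0.15)$ and $(0.95,0.05)$ (so $\Bar{P}=(0.9,0.1)$). By direct calculation the epistemic covariance is identical across groups because it depends only on the centred deviations $P_m-\Bar{P}$, and the aleatoric contribution is identical because $p(1-p)$ is invariant under $p\mapsto 1-p$. Hence $\Unce$, $\Unca$ and $\Uncp$ all match between groups, yet $P(\hat{Y}=1\mid G=0)=1\ne 0 = P(\hat{Y}=1\mid G=1)$, violating statistical parity (and, with suitable label placement, equal opportunity and equalised odds too).

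The main obstacle is ensuring that a single counterexample uniformly decouples all three flavours of uncertainty from the customary choices of $\PMeasure$. This is handled by the two structural symmetries invoked above: the epistemic term depends only on the $L^2$ spread of $\{P_m\}$ around $\Bar{P}$ and is invariant under rigid translation of $\Bar{P}$, while the aleatoric term is invariant under the label swap $p\mapsto 1-p$. Realisability of the exhibited MC distributions by an actual BNN posterior is not required --- the proposition is a formal statement about the decoupling of the two fairness predicates for arbitrary predictors with quantified uncertainties --- but can be arranged, if desired, by choosing a sufficiently expressive weight prior on a small architecture fitted to the toy dataset.
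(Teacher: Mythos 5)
Your proposal is correct and shares the paper's overall strategy---refuting each implication by exhibiting a counterexample---but it executes that strategy quite differently. The paper's proof is empirical: it constructs three synthetic two-group datasets (SD1/SD2 for $\textrm{Fair}(f;\PMeasure,D)\centernot\implies\textrm{Fair}(f;\Unc,D)$ via aleatoric and epistemic gaps respectively, SD3 for the converse), trains an actual BNN on each, and points to the measured values in Table~\ref{tab:SD_results} as the witnesses. You instead hand-specify the Monte Carlo outputs $\{P_m\}$ and verify the decoupling algebraically, exploiting that the epistemic term depends only on the centred deviations $P_m-\Bar{P}$ (hence is translation-invariant in $\Bar{P}$) while the binary aleatoric term $p(1-p)$ is invariant under $p\mapsto 1-p$. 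Your route buys a self-contained, reproducible-by-inspection argument that does not hinge on training outcomes; the paper's route buys realisability for free (the witnessing predictor demonstrably exists, since it was trained) and doubles as experimental evidence, at the cost of the ``proof'' resting on reported numbers rather than a derivation. Two small repairs to your construction: in the first counterexample the oscillation between $(0.45,0.55)$ and $(0.55,0.45)$ gives $\Bar{P}=(0.5,0.5)$, so the argmax is a tie; shift it slightly (e.g.\ $(0.44,0.56)$ and $(0.54,0.46)$) so that class $1$ is unambiguously predicted for both groups while the aleatoric and epistemic gaps persist. And since you dismiss realisability, you should state explicitly that the proposition is read as quantifying over predictors equipped with arbitrary MC output collections; the paper avoids this caveat precisely because its witnesses are trained models.
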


\begin{proof}
We will prove the two non-implications in the proposition using contradictions:

\textbf{Proof of $\textrm{Fair}(f; \mathcal{M}, D) \centernot\implies \textrm{Fair}(f; \Unc, D)$}: we assume that the implication is true, i.e., $\textrm{Fair}(f; \mathcal{M}, D) \implies \textrm{Fair}(f; \Unc, D)$. That means there may not be a predictor $f$ which is $\mathcal{M}$-wise fair but $\Unc$-wise unfair. 
As contradiction, we select as examples the Synthetic Dataset 1 \& 2 in Section \ref{sect:datasets}(A,B) -- see also Fig. \ref{fig:synthetic_1_and_2}. In this contradictory example, we see a predictor (namely, a BNN -- see Section \ref{sect:training_details} for architecture and training details) which is 
$\mathcal{M}$-wise fair but $\Unc$-wise unfair (Table \ref{tab:SD_results}). Therefore, $\textrm{Fair}(f; \mathcal{M}, D) \implies \textrm{Fair}(f; \Unc, D)$ is not necessarily true, and therefore, $\textrm{Fair}(f; \mathcal{M}, D) \centernot\implies \textrm{Fair}(f; \Unc, D)$.

\textbf{Proof of $\textrm{Fair}(f; \Unc, D) \centernot\implies \textrm{Fair}(f; \mathcal{M}, D)$}. We will follow the same reasoning for this non-implication: we assume that the implication is true, i.e., $\textrm{Fair}(f; \mathcal{U}, D) \implies \textrm{Fair}(f; \mathcal{M}, D)$. That means there may not be a predictor $f$ which is $\Unc$-wise fair but $\mathcal{M}$-wise unfair.
As contradiction, we select the example in Sect. \ref{sect:datasets}(C) -- see also Fig. \ref{fig:syn_dataset3}. In this  example, we see a predictor (again, a BNN -- see Sect. \ref{sect:training_details} for architecture and training details) which is 
$\Unc$-wise fair but $\mathcal{M}$-wise unfair (Table \ref{tab:SD_results}). Therefore, $\textrm{Fair}(f; \mathcal{U}, D) \implies \textrm{Fair}(f; \mathcal{M}, D)$ is not necessarily true, and therefore, $\textrm{Fair}(f; \Unc, D) \centernot\implies \textrm{Fair}(f; \mathcal{M}, D)$.
\end{proof}

\subsection{Uncertainty-based Individual Fairness}
\label{sect:indv_fairness}

We extend the definition in Eq. \ref{eqn:consistency} to account for ``similar individuals should have similar prediction \textbf{uncertainties}'':
\begin{equation}\label{eqn:indv_fairness}
   \mathcal{F}^{indv}_\Unc(X=\mathbf{x}_i) = 1-\left| \Unc_i - \frac{1}{k} \sum_{\mathbf{x}_j \in k\textrm{NN}(\textbf{x}_i)} \Unc_j \right|,
\end{equation}
which we aggregate over a group by averaging.

\section{Experiments}

In this section, we introduce the datasets used, the implementation and training details as well as the evaluation measures used within the experiments.

\subsection{Datasets}
\label{sect:datasets}

We introduce three synthetic datasets and utilize three real datasets to evaluate the measures. 
We adopt the approach of \cite{zafar2017fairness} for all synthetic dataset curation.
Each synthetic dataset has 320 samples with 20\% 
reserved for testing. 

\begin{figure*}[hbt!]
  \centerline{
  \subfigure[Synthetic Dataset 1]
  {
    \includegraphics[width=0.3\columnwidth]{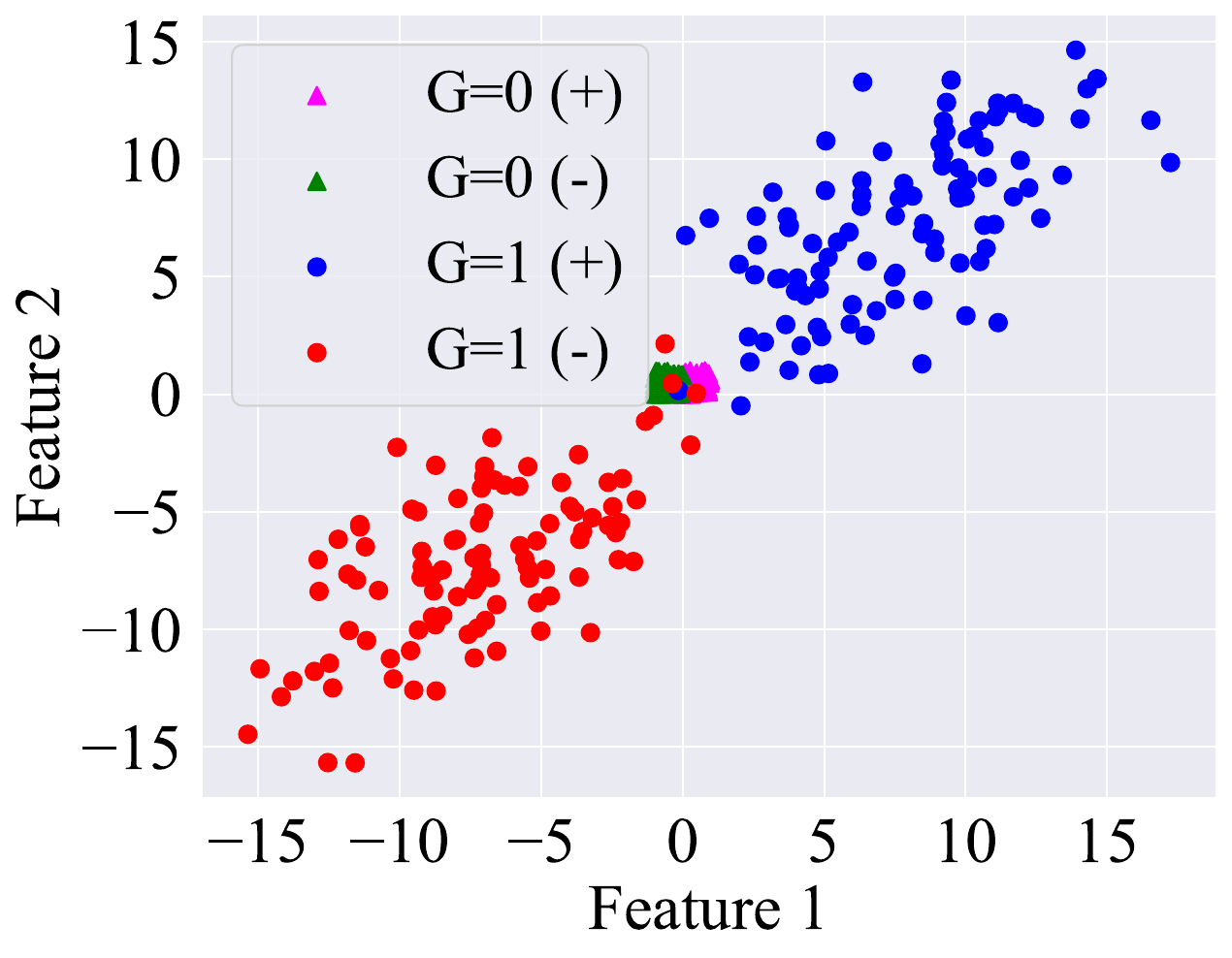}
    \label{fig:aleatoric_toy_dataset}
  }
  \subfigure[Synthetic Dataset 2]
  {
    \includegraphics[width=0.3\columnwidth]{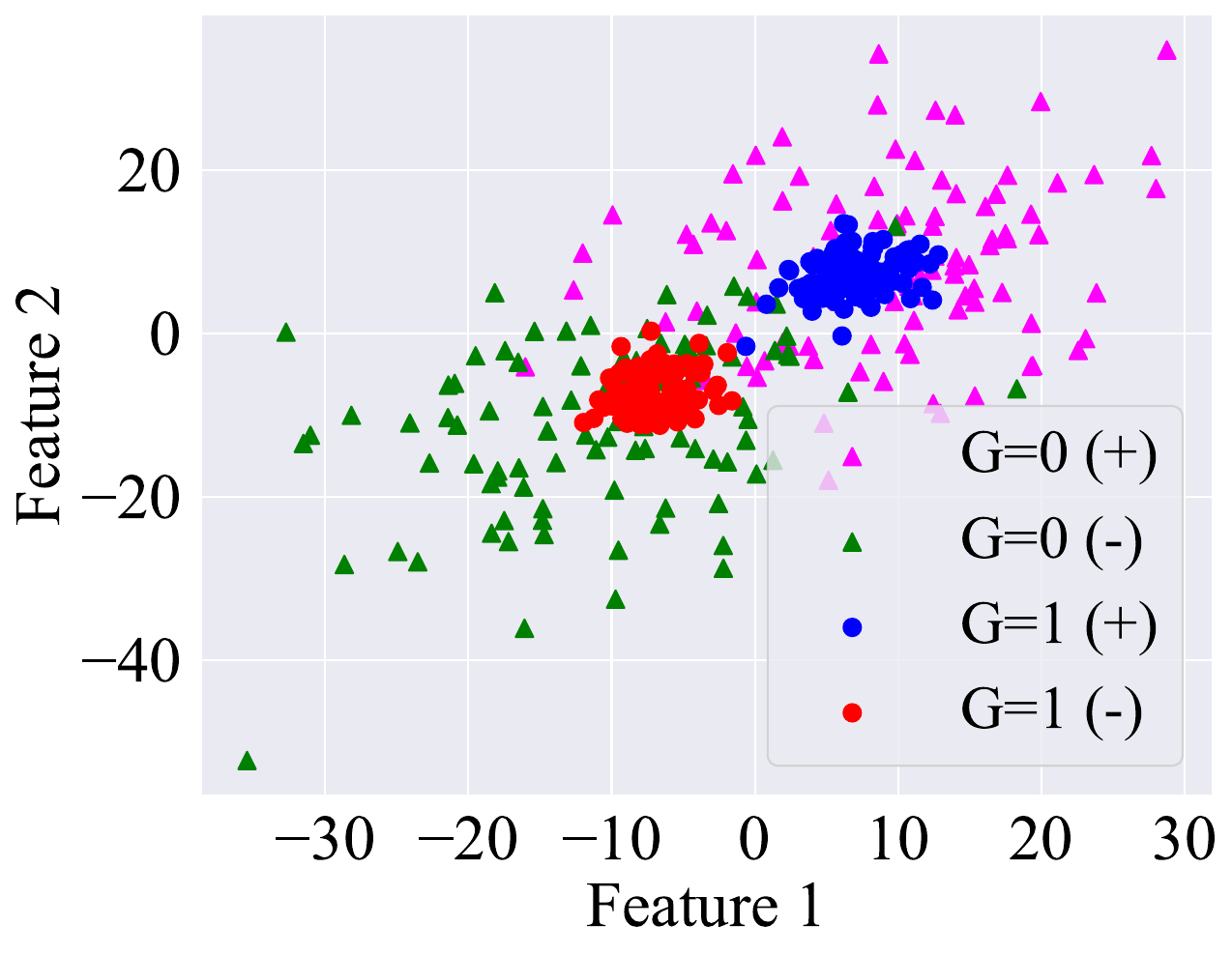}
    \label{fig:epistemic_toy_dataset}
  }
  \subfigure[Synthetic Dataset 3]
  {
    \includegraphics[width=0.3\columnwidth]{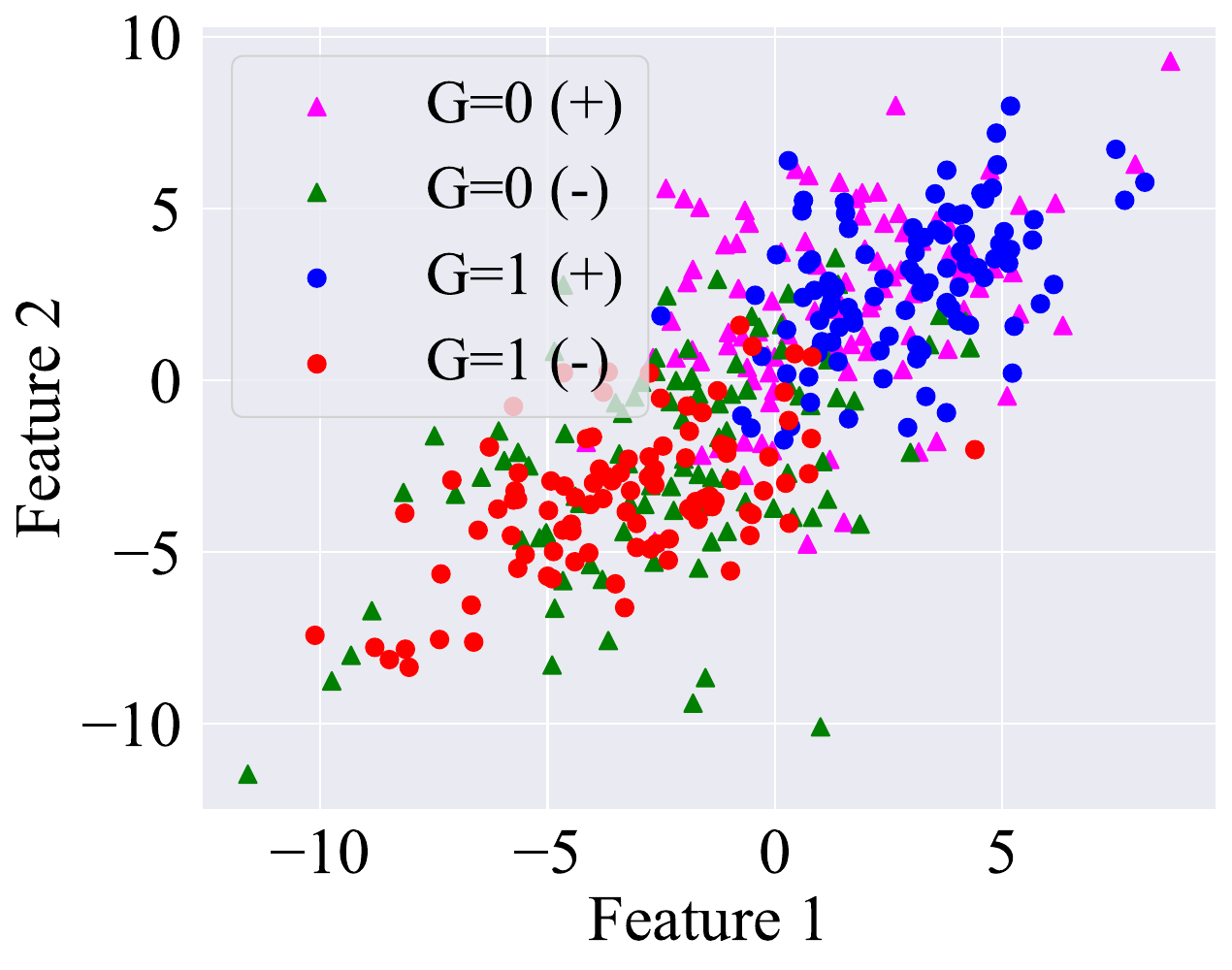}
    \label{fig:syn_dataset3}
 }
 }
\caption{Two datasets that appear to be fair with point-based measures but unfair in terms of \textbf{(a)} aleatoric uncertainty and \textbf{(b)} epistemic uncertainty. In \textbf{(c)}, we see a set where the classifier is fair in terms of uncertainties (both epistemic and aleatoric) but unfair in terms of point-based measures.
}
\label{fig:synthetic_1_and_2}
\vspace{-0.5cm}
\end{figure*}

\subsubsection{(A) Synthetic Dataset 1 (SD1) - Case of Aleatoric Uncertainty}
This dataset highlights how a classifier may appear to be fair in terms of point-based performance metrics yet unfair \textit{in terms of aleatoric uncertainties}. 
We obtain 100 samples from each of the following four multivariate distributions, one for each of the respective attribute-label pair that we consider:
{
\begin{eqnarray}
    P(X | G = 0, Y = 0) & = & \mathcal{B}eta (\alpha=[0.5, 0.5], \beta=[0.5, 0.5]), \\ 
    P(X | G = 0, Y = 1) & = & -\mathcal{B}eta (\alpha=[0.5, 0.5], \beta=[0.5, 0.5]), \\  
    P(X | G = 1, Y = 0) & = & \mathcal{N} ([-7, -7], [15,10 ; 10, 15]), \\ 
    P(X | G = 1, Y = 1) & = & \mathcal{N} ([7, 7], [15,10 ; 10, 15]).
\end{eqnarray}
}

\subsubsection{(B) Synthetic Dataset 2 (SD2) - Case of Epistemic Uncertainty}
This dataset highlights how a classifier may be fair with point-based fairness measures but \textit{unfair in terms of epistemic uncertainties}. 
We obtain 100 samples from each of the following four multivariate distributions, one for each of the respective attribute-label pair that we consider:
{
\begin{eqnarray}
    P(X | G = 0, Y = 0) & = & \mathcal{N} ([-10, -10], [100,30 ; 30, 100]), \\ 
    P(X | G = 0, Y = 1) & = & \mathcal{N} ([10, 10], [100, 30 ; 30, 100]), \\  
    P(X | G = 1, Y = 0) & = & \mathcal{N} ([-7, -7], [5, 1 ; 5, 1]), \\ 
    P(X | G = 1, Y = 1) & = & \mathcal{N} ([7, 7], [5, 1 ; 1, 5]).
\end{eqnarray}
}

\subsubsection{(C) Synthetic Dataset 3 (SD3) - Fair in Uncertainty, Unfair in Predictions}
This dataset aims to highlight how a classifier may be fair according to the proposed uncertainty-based fairness measures but \textit{unfair in terms of point-based fairness measures}. 
We obtain 100 samples from each of the following four multivariate distributions, one for each of the respective attribute-label pair that we consider:
{
\begin{eqnarray}
    P(X | G = 0, Y = 0) & = & \mathcal{N} ([-2, -2], [7, 3 ; 3, 7]), \\ 
    P(X | G = 0, Y = 1) & = & \mathcal{N} ([2, 2], [7, 3 ; 3, 7]), \\  
    P(X | G = 1, Y = 0) & = & \mathcal{N} ([-3, -3], [5, 3 ; 5, 3]), \\ 
    P(X | G = 1, Y = 1) & = & \mathcal{N} ([3, 3], [5, 3 ; 3, 5]).
\end{eqnarray}
}

\subsubsection{(D) COMPAS Recidivism Dataset } The COMPAS Recidivism Dataset is a dataset with criminal offenders' records generally used to predict recidivism (binary classification) \cite{angwin2016machine}. The data contains 6172 samples with 14 features. 
We follow \cite{zafar2017fairness} in terms of the considered attributes and dataset splits.
We assume that the positive label ($Y=1$) stands for the cases where the subject has recidivated and vice versa.

\subsubsection{(E) Adult Income Dataset } The Adult Income Dataset contains a $48K+$ samples with 14 features \cite{becker1996adult}.
The task is to predict if a person's annual income is greater than $\$50K$ ($Y=1$) or not.
We do not consider the samples with missing entries, resulting in a total of $45K+$ samples.
We adhere to the training-testing split provided by the authors. 

\subsubsection{(F) D-Vlog Depression Detection Dataset } 
The D-Vlog Depression Detection Dataset contains visual and acoustic features from Youtube videos of 555 depressed and 406 non-depressed samples belonging to 639 females and 322 males \cite{yoon2022dvlog}. The authors truncated the videos with longer than $t= 596s$ and zero-pad shorter ones. D-Vlog only provides the gender attribute for its samples. We follow the training and testing splits as provided by the authors. We assume that the positive label ($Y=1$) stands for the depressed class and vice versa.

\subsection{Implementation and Training Details}
\label{sect:training_details}


For all of our experiments, except for D-Vlog,
we utilize Bayesian Neural Networks (BNNs) for both classification and uncertainty estimation. 
To address the intractability of $P(Y | X)$, we utilize the well-known \textit{Bayes by Backprop} method \cite{blundell2015weight}
which minimizes the following objective consisting of a KL divergence \cite{kullback1951information} term and a numerically-stable negative log-likelihood term as proposed in \cite{kendall2017uncertainties}:
{\small
\begin{equation}
    \mathcal{L}(\theta) = \sum_{m=1}^M \underbrace{ [\log q_{\theta} (\omega_m) - \log P (\omega_m)}_{\text{KL divergence}} + \lambda \underbrace{\mathcal{L}_{NLL}(\hat{Y}, Y)}_{\text{classification loss}}],
\end{equation}
}
with $\lambda$ being a constant.
%
%
%

For all experiments, we use the Adam optimizer \cite{kingma2017adam}. Following \cite{kwon2020uncertainty}, we set $T=10$ (the number of Monte Carlo samples for uncertainty quantification as defined in Section 
4.1).
Furthermore, following one of the settings provided in \cite{blundell2015weight}, we use $10$ Monte Carlo samples to approximate the variational posterior, $q_{\theta}(\omega)$, and sample the initial mean of the posterior from a Gaussian with $\mu = 0$ and $\sigma = 1$.
The $\pi$ value, weighting factor for the prior, is set to $0.5$ and the two $\sigma_1$ and $\sigma_2$ values for the scaled mixture of Gaussians is set to $0$ and $6$ respectively.
We consider $\lambda$ from the BNN training objective to be $2000$.
We utilize early stopping to determine the number of training iterations for all experiments.
In the following, we describe dataset specific details. In all cases, the hyper-parameters are tuned to avoid over-fitting:

\paragraph{Synthetic Datasets:} As the datasets are relatively simple, we observe that BNNs with no hidden layers suffice for all three synthetic datasets.
We train all of them for $5$ epochs with a batch size of $8$.

\paragraph{COMPAS Recidivism Dataset:} We employ a BNN with a single hidden layer of size $100$.
We train the model for $10$ epochs with a batch size of $256$.
Similar to \cite{chouldechova2017fair}, 
we consider fairness with respect to race, gender and age. 
For the race attribute, we follow \cite{zafar2017fairness} and focus on the fairness gap between black and white subgroups, considering African-Americans as the minority group, $G_0$.
For the gender attribute, we designate females as the minority group ($G_0$) due to the class imbalance in favor of the male group. For the age attribute, we consider individuals younger than $25$ to be the minority group and individuals older than $45$ as the majority group since our classifier provided the worst result for those younger than 25 and overall best results for those older than $45$. To keep the coverage to a binary setting, we do not consider individuals aged between $25$ and $45$. Extending the measures to such a multi-valued setting is straightforward \cite{xu2020investigating} and left as future work. 
%
%

\paragraph{Adult Income Dataset:} We employ a BNN with no hidden layers where the intermediate size is $25$.
We train the model for $5$ epochs with a batch size of $256$.
Although a deeper analysis could be conducted through considering other variables such as marriage status, highest education level, occupation and nationality, for the sake of consistency with the analysis of the other datasets, we limit the experiments to race, gender and age.

\paragraph{D-Vlog Depression Detection Dataset:} D-Vlog samples have significantly larger dimensionality (596s of 136-dim visual and 25-dim acoustic features) compared to COMPAS and Adult, which turned out to be challenging for BNNs. Therefore, we utilize the transformer-based \textit{Depression Detector} architecture proposed by \cite{yoon2022dvlog}. For uncertainty estimation, we follow \cite{lakshminarayanan2017simple} to obtain $T$ predictions with an ensemble of $T$ models and use the same method (Eq. 
4.1) as with uncertainty estimation with BNNS.
Specifically, instead of performing MC forward passes, we train $T$ different models on the same training set and consider their predictions in the same testing set during the uncertainty quantification process. We choose $T=5$ 
as existing work indicates that performance tends to peak at that number \cite{havasi2020training}.

For all of training configurations, we directly use the  setting of \cite{yoon2022dvlog} with a learning rate of $0.0002$ and a batch size of $32$, optimized for $50$ epochs through the Adam optimizer \cite{kingma2017adam}.
For the dropout rate, we empirically choose $0.1$ though it was not explicitly provided by the authors in the original work. For more details on the architecture and the relevant training details, we refer the reader to \cite{yoon2022dvlog}.


\subsection{Evaluation Measures}
\label{sect:performance_evaluation}

We evaluate classification performance in terms of accuracy ($\PMeasure_{Acc}$), Positive Predictive Value ($\PMeasure_{PPV}=TP/(TP+FN)$), Negative Predictive Value ($\PMeasure_{NPV}=TN/(TN+FN)$), False Positive Rate ($\PMeasure_{FPR}$) and False Negative Rate ($\PMeasure_{NPR}$). Fairness measures ($\FMeasure$) are defined as follows (similar to e.g. \cite{feldman2015certifying,xu2020investigating,cheong2024fairrefuse}): 
{ 
\begin{eqnarray}
    \textrm{Statistical Parity: } & \FMeasure_{SP} = \frac{P(\hat{Y}=1|G=0)}{P(\hat{Y}=1|G=1)}, \\
    \textrm{Equal Opportunity: } & \FMeasure_{EOpp} = \frac{P(\hat{Y}=0|Y=1, G=0)}{P(\hat{Y}=0|Y=1, G=1)}, \\
    \textrm{Equalized Odds: } & \FMeasure_{EOdd} = \frac{P(\hat{Y}=1|Y=y, G=0)}{P(\hat{Y}=1|Y=y, G=1)}, \\
    \textrm{Equal Accuracy: } & \FMeasure_{EAcc} = \frac{\PMeasure_{Acc}(D, f, G=0)}{\PMeasure_{Acc}(D, f, G=1)}, \\
    \textrm{Uncertainty Fairness: } & \FMeasure_{u} = \frac{\Unc_{u}(D, f, G=0)}{\Unc_{u}(D, f, G=1)},
\end{eqnarray}}
where $u$ can be $Alea$ (Aleatoric), $Epis$ (Epistemic) or $Pred$ (Predictive).

\section{Results}
In this section, we discuss the results obtained across both the synthetic and real-world datasets.

\subsection{Experiment 1: Synthetic Datasets}
Here, we analyze the point-based and uncertainty-based fairness measures with SD1, SD2 and SD3.

\begin{table}[htb!]
\caption{Experiment 1: The analysis with SD1, SD2 and SD3 datasets. We see that, for both SD1 and SD2, the classifier is fair in terms of point-based measures ($|\FMeasure - 1| \le 0.2$ -- following \cite{feldman2015certifying}) whereas it is unfair in terms of aleatoric uncertainty for SD1 and epistemic uncertainty unfair for SD2. We see the inverse for the SD3 dataset. Unfair values are \highlight{highlighted}.
    \label{tab:SD_results}}
  \centering\small
  \begin{tabular}{ccccccc}
    \toprule
    & \multicolumn{2}{c}{SD1} & \multicolumn{2}{c}{SD2} & \multicolumn{2}{c}{SD3}\\
    \midrule
    \textbf{Measure} & \textbf{$G_0$} & \textbf{$G_1$} & \textbf{$G_0$} & \textbf{$G_1$} & \textbf{$G_0$} & \textbf{$G_1$} \\
    \midrule
    \multicolumn{5}{l}{\it\scriptsize Performance Measures}\\
    $\uparrow$ $\PMeasure_{Acc}$          & 0.95   & 0.95  & 0.95   & 0.95   & 0.74 & {0.93}\\
    $\uparrow$ $\PMeasure_{PPV}$          & {0.95}   & 0.90  & 0.95   & 0.95 & 0.62      & {0.96}\\
    $\uparrow$ $\PMeasure_{NPV}$          & 0.94   & {0.95}  & 0.94   & 0.94 & {0.93}      & 0.91\\
    $\downarrow$ $\PMeasure_{FPR}$        & 0.06   & {0.05} & {0.05}   & 0.06 & 0.38      & {0.04}\\
    $\downarrow$ $\PMeasure_{FNR}$        & 0.05   & 0.05 & 0.05   & 0.05 & {0.07}      & 0.08\\
    $\downarrow$ $\mathcal{\Unc}_e$  & 0.0001   & 0.0001 & 0.0011   & {0.0004}  & 0.0002      & 0.0002\\
    $\downarrow$ $\mathcal{\Unc}_a$  & 0.4926   & {0.1053}  & {0.1915}   & 0.2193 & 0.3349   & {0.3229}\\
    $\downarrow$ $\mathcal{\Unc}_p$ & 0.4927   & {0.1054} & {0.1926}   & 0.2197 &  0.3351    & {0.3231}\\
    \hdashline[1pt/1pt]
    \multicolumn{5}{l}{\it\scriptsize Point-based Fairness Measures}\\
    $\FMeasure_{SP}$ & \multicolumn{2}{c}{1.07} & \multicolumn{2}{c}{1.00} & \multicolumn{2}{c}{1.17}\\
    $\FMeasure_{Opp}$ & \multicolumn{2}{c}{1.00} & \multicolumn{2}{c}{1.00} & \multicolumn{2}{c}{1.01}\\
    $\FMeasure_{Odd}$ & \multicolumn{2}{c}{1.05} & \multicolumn{2}{c}{0.95} & \multicolumn{2}{c}{\highlight{7.90}}\\
    $\FMeasure_{EAcc}$ & \multicolumn{2}{c}{1.00} & \multicolumn{2}{c}{1.00} & \multicolumn{2}{c}{\highlight{0.79}}\\
    \hdashline[1pt/1pt]
    \multicolumn{5}{l}{\it\scriptsize Uncertainty-based Fairness Measures (Ours)}\\
    {$\FMeasure_{Epis}$} & \multicolumn{2}{c}{{1.01}} & \multicolumn{2}{c}{{\highlight{2.75}}} & \multicolumn{2}{c}{{1.05}}\\
    {$\FMeasure_{Alea}$} & \multicolumn{2}{c}{{\highlight{4.68}}} & \multicolumn{2}{c}{{0.87}} & \multicolumn{2}{c}{{1.04}}\\
    {$\FMeasure_{Pred}$} & \multicolumn{2}{c}{{\highlight{4.67}}} & \multicolumn{2}{c}{{0.88}} & \multicolumn{2}{c}{{1.04}}\\
    \bottomrule
  \end{tabular}
\end{table}

\paragraph{Analysing $\textrm{Fair}(f; \Unc, D) \centernot\implies \textrm{Fair}(f; \mathcal{M}, D)$.}
With reference to SD 1 \& 2 as introduced in Section \ref{sect:datasets} and 
in Fig. \ref{fig:aleatoric_toy_dataset} and \ref{fig:epistemic_toy_dataset}, we select the group with higher uncertainty estimations as the minority group, i.e., $G_0$.
From Table \ref{tab:SD_results}, we see that the classifier (BNN) can solve the classification task with a good level of performance (with high accuracy and low mis-classification). Moreover, the classifier is fair in terms of the widely-used point-based measures ($\FMeasure_{SP}, \FMeasure_{Opp}, \FMeasure_{Odd}$ and $\FMeasure_{EAcc}$)
with $|\FMeasure - 1| \le 0.2$ -- following \cite{feldman2015certifying}.
However, our uncertainty-based measures suggest that the classifier is significantly unfair in terms of aleatoric uncertainty (for SD1 with $\FMeasure_{Alea} = 4.68$) and epistemic uncertainty (for SD2 with $\FMeasure_{Epis} = 275$).

\paragraph{Analysing $\textrm{Fair}(f; \mathcal{M}, D) \centernot\implies \textrm{Fair}(f; \Unc, D)$}
With reference to SD 3 as introduced in Section \ref{sect:datasets} and 
and Fig. \ref{fig:syn_dataset3}, we select the group with the lower classification performance as the minority group, $G_0$.
Results in Table \ref{tab:SD_results} suggest that the classifier provides a good level of performance for the majority group ($G_1$) and that the classifier is unfair in terms of some of the point-based fairness measures (namely, $\FMeasure_{Odd}$ = $7.9$ and $\FMeasure_{EAcc}$ = $0.79$). However, the classifier appears to be fair across the uncertainty-based fairness measures.

\begin{table}[hbt!]
  \caption{Experiment 2: The analysis with COMPAS. 
    Unfair values ($|\FMeasure-1|>0.2$, following \cite{feldman2015certifying}) are \highlight{highlighted}.  B/W: Black/White. F/M: Female/Male.
    \label{tab:compas_fairness_results}}
  \centering\small
  \begin{tabular}{ccccc}
    \toprule
    \textbf{Measure} & \textbf{B ($G_0$)} & \textbf{W ($G_1$)} & \textbf{F ($G_0$)} & \textbf{M ($G_1$)}\\
    \midrule
    $\uparrow$ Sample Size   & {3175}   & 2103    & 1175 & {4997}\\
    \hdashline[1pt/1pt]
    \multicolumn{5}{l}{\it\scriptsize Performance Measures}\\
    $\uparrow$ $\PMeasure_{Acc}$   & {0.70}   & 0.68    & {0.77} & 0.68\\
    $\uparrow$ $\PMeasure_{PPV}$        & {0.69}   & 0.68    & {0.70} & 0.68 \\
    $\uparrow$ $\PMeasure_{NPV}$        & {0.72}   & 0.68     & {0.78} & 0.68\\
    $\downarrow$ $\PMeasure_{FPR}$      & 0.34   & {0.10}     & {0.05} & 0.25\\
    $\downarrow$ $\PMeasure_{FNR}$      & {0.25}   & 0.66     & 0.67 & {0.39}\\
    $\downarrow$ $\mathcal{\Unc}_e$  & 0.0006 & {0.0004}  & {0.0003} & 0.0006\\
    $\downarrow$ $\mathcal{\Unc}_a$  & 0.2299 & {0.1578}  & {0.1599} & 0.2053\\
    $\downarrow$ $\mathcal{\Unc}_p$ & 0.2305 & {0.1583}   & {0.1602} & 0.2059\\
    \hdashline[1pt/1pt]
    \multicolumn{5}{l}{\it\scriptsize Point-based Fairness Measures}\\
    $\FMeasure_{SP}$ & \multicolumn{2}{c}{\highlight{2.84}}    & \multicolumn{2}{c}{\highlight{0.31}}\\
    $\FMeasure_{EOpp}$ & \multicolumn{2}{c}{\highlight{2.19}}  & \multicolumn{2}{c}{\highlight{0.54}} \\
    $\FMeasure_{EOdd}$ & \multicolumn{2}{c}{\highlight{1.57}}  & \multicolumn{2}{c}{\highlight{0.40}} \\
    $\FMeasure_{EAcc}$ & \multicolumn{2}{c}{1.03} & \multicolumn{2}{c}{1.13}  \\
    \hdashline[1pt/1pt]
    \multicolumn{5}{l}{\it\scriptsize Uncertainty-based Fairness Measures (Ours)}\\
    {$\FMeasure_{Epis}$} & \multicolumn{2}{c}{\highlight{1.55}} & \multicolumn{2}{c}{\highlight{0.50}}\\
    $\FMeasure_{Alea}$ & \multicolumn{2}{c}{\highlight{1.46}}  & \multicolumn{2}{c}{\highlight{0.78}}\\
    $\FMeasure_{Pred}$ & \multicolumn{2}{c}{\highlight{1.46}}  & \multicolumn{2}{c}{\highlight{0.78}}\\
    \bottomrule
  \end{tabular}
\end{table}

\subsection{Experiment 2: Real-world Datasets}
\label{sect:experiments_real_world}

In this section, we analyze the fairness measures across the real-world datasets, i.e., COMPAS, Adult and D-Vlog.
%

\subsubsection{The COMPAS Dataset.}

With reference to Table  \ref{tab:compas_fairness_results}, across \textbf{race},
results suggest that there is strong bias against African-Americans in terms of recidivism even though there are more samples for African-Americans: The classifier has a clear tendency to suggest a black person to recidivate ($\PMeasure_{FPR}$ = $0.34$ African-Americans vs. $0.10$) and vice versa for Whites ($\PMeasure_{FNR}$ = $0.66$ for Whites vs. $0.25$). The point-based fairness measures (except for $\PMeasure_{Eacc}$) capture this bias strongly, so do the uncertainty-based measures. Despite having more samples, African-Americans have higher $\Unc_e$ and $\Unc_a$, leading to significant unfairness in terms of uncertainty ($\FMeasure_{Epis}$ and $\FMeasure_{Alea}$).

Across \textbf{gender},
females have significantly better prediction performance compared to Males, with the exception of $\PMeasure_{FNR}$ ($M_{FNR}$ = $0.67$ for Females vs. $0.39$ for Males). This suggests that the classifier is biased to predict $Y=0$ (``no recidivism") for Females. Both point-based and uncertainty-based fairness measures capture this bias against Males.
Across epistemic uncertainty, we 
hypothesize that the classifier is less certain for Males. However, fairness gaps in terms of aleatoric uncertainty ($0.78$) and predictive uncertainty ($0.78$) are close to the acceptable fairness boundary ($0.8$), suggesting that the main issue across gender may be the sample imbalance problem across groups (see also Table \ref{tab:real_dataset_distributions}).

\begin{table*}[hbt!]
\caption{Experiment 2: The analysis with the COMPAS Recidivism Dataset for race (Black vs. White), age (younger than 25 vs. older than 45) and gender (male vs. female) attributes. Severe values of fairness values ($|\FMeasure-1|>0.2$, following \cite{zanna2022bias}) are \highlight{highlighted}. \label{suptab:compas_fairness_results}
    }
  \centering\small
  \begin{tabular}{cccccccc}
    \toprule
                    & \multicolumn{2}{c}{\textbf{Race}} & \multicolumn{3}{c}{\textbf{Age}} & \multicolumn{2}{c}{\textbf{Gender}}\\
    \cdashline{2-8}
    \textbf{Measure} & \textbf{Black} & \textbf{White} & \textbf{$<$25 } & \textbf{25-45} & \textbf{$>$45} & \textbf{Female} & \textbf{Male}\\
        & ($G_0$) & ($G_1$) & ($G_0$) & ($G_1$) & ($G_0$) & ($G_1$)\\
    \midrule
    $\uparrow$ Sample Size   & {3175}   & 2103   & 1347  & {3532}   & 1293   & 1175 & {4997}\\
    \hdashline[1pt/1pt]
    \multicolumn{7}{l}{\it\scriptsize Performance Measures}\\
    $\uparrow$ $\PMeasure_{Acc}$   & {0.70}   & 0.68   & 0.64  & 0.71   & {0.75}   & {0.77} & 0.68\\
    $\uparrow$ $\PMeasure_{PPV}$        & {0.69}   & 0.68   & 0.64  & {0.72}   & 0.64   & {0.70} & 0.68 \\
    $\uparrow$ $\PMeasure_{NPV}$        & {0.72}   & 0.68   & 0.63  & 0.70   & {0.78}   & {0.78} & 0.68\\
    $\downarrow$ $\PMeasure_{FPR}$      & 0.34   & {0.10}   & 0.41  & 0.18   & {0.12}   & {0.05} & 0.25\\
    $\downarrow$ $\PMeasure_{FNR}$      & {0.25}   & 0.66   & {0.32}  & 0.44   & 0.54    & 0.67 & {0.39}\\
    $\downarrow$ $\mathcal{\Unc}_e$  & 0.0006 & {0.0004} & 0.0010 & 0.0005 & {0.0002}  & {0.0003} & 0.0006\\
    $\downarrow$ $\mathcal{\Unc}_a$  & 0.2299 & {0.1578} & 0.3459 & 0.1712 & {0.1027}  & {0.1599} & 0.2053\\
    $\downarrow$ $\mathcal{\Unc}_p$ & 0.2305 & {0.1583} & 0.3469 & 0.1717 & {0.1029}  & {0.1602} & 0.2059\\
    \hdashline[1pt/1pt]
    \multicolumn{7}{l}{\it\scriptsize Point-based Fairness Measures}\\
    $\FMeasure_{SP}$ & \multicolumn{2}{c}{\highlight{2.84}} & \multicolumn{3}{c}{\highlight{2.44}} & \multicolumn{2}{c}{\highlight{0.31}}\\
    $\FMeasure_{EOpp}$ & \multicolumn{2}{c}{\highlight{2.19}}  & \multicolumn{3}{c}{\highlight{1.48}} & \multicolumn{2}{c}{\highlight{0.54}} \\
    $\FMeasure_{EOdd}$ & \multicolumn{2}{c}{\highlight{1.57}}  & \multicolumn{3}{c}{\highlight{2.37}} & \multicolumn{2}{c}{\highlight{0.40}} \\
    $\FMeasure_{EAcc}$ & \multicolumn{2}{c}{1.03} & \multicolumn{3}{c}{0.85} & \multicolumn{2}{c}{1.13}  \\
    \hdashline[1pt/1pt]
    \multicolumn{7}{l}{\it\scriptsize Uncertainty-based Fairness Measures (Ours)}\\
    {$\FMeasure_{Epis}$} & \multicolumn{2}{c}{\highlight{1.55}}  & \multicolumn{3}{c}{\highlight{4.35}} & \multicolumn{2}{c}{\highlight{0.50}}\\
    $\FMeasure_{Alea}$ & \multicolumn{2}{c}{\highlight{1.46}} & \multicolumn{3}{c}{\highlight{3.36}}  & \multicolumn{2}{c}{\highlight{0.78}}\\
    $\FMeasure_{Pred}$ & \multicolumn{2}{c}{\highlight{1.46}}  & \multicolumn{3}{c}{\highlight{3.37}} & \multicolumn{2}{c}{\highlight{0.78}}\\
    \bottomrule
  \end{tabular}
    
\end{table*}

In Table \ref{suptab:compas_fairness_results}, we see the complete table for COMPAS, including the age attribute.
Across age, we observe that almost all of the performance metrics are better for those age greater than $45$ compared to the others, with the exception of $\PMeasure_{PPV}$ and $\PMeasure_{FNR}$.
For $\PMeasure_{PPV}$, samples with ages between $25-45$ is the best with $\PMeasure_{PPV} = 0.72$ and for $\PMeasure_{FNR}$, samples with ages less than $25$ is the best with  $\PMeasure_{FNR} = 0.32$.
As explained within Section 
\ref{sect:experiments_real_world}, we compute both the point-based and proposed uncertainty-based measures by considering the subgroup age greater than $45$ as the majority subgroup ($G_1$) and the subgroup age less than $25$ as the minority subgroup ($G_0$).

Across the point-based fairness measures, we observe that all but one of the measures point to \textit{unfair} predictions, with $\FMeasure_{SP} = 2.44$, $\FMeasure_{Opp} = 1.48$ and $\FMeasure_{Odd} = 2.37$.
Similar to the race attribute, $\FMeasure_{EAcc} = 0.85$ claims \textit{fair} predictions, with all of the other point-based measures directly implying that the model in question is inclined to \textit{unfairly predict} the subgroup age less than $25$ as $y = 1$, i.e recidivating an offense.

Furthermore, the proposed uncertainty-based fairness measures also show similar results with $\FMeasure_{Epis} = 4.35$, $\FMeasure_{Alea} = 3.36$ and $\FMeasure_{Pred} = 3.37$.
A similar conclusion with the race attribute could be arrived here with the $\FMeasure_{Epis} = 4.35$, i.e the \textit{lack of data} according to the model behavior is higher for the subgroup age less than $25$ compared to the the subgroup age greater than $45$ even though the dataset actually contains less samples for the latter.
Even though we also observe a similar pattern with the race attribute with $\FMeasure_{Alea} = 3.36$ and $\FMeasure_{Pred} = 3.37$, the fairness gap in this case is significantly higher.
These measures also show that the classification hardness, the noise faced by the model according to its own behavior, is drastically higher for the subgroup of individuals with an age of less than $25$.

\paragraph{Social Impact:}
From the results above, we see how existing point-based measures merely highlight the prediction bias present. 
Our proposed uncertainty-based measures go a step beyond by serving as a fairness evaluation tool which points towards the \textit{potential source of bias}: The persistent social inequity across race \cite{ding2021retiring}, and towards a potential solution: balancing samples across gender.
In addition, data collected from a real-world setting is bound to be implicated or corrupted by group-dependent labelling or annotation noise \cite{wang2021fair}.
For instance, it has been demonstrated that labels for criminal activity generated via crowdsourcing are systematically biased against certain subgroups \cite{dressel2018accuracy}.
This label class and subgroup dependent heterogeneous systematic bias cannot be quantified by point-based fairness measures.
%
However, we hypothesize that this bias can be captured by our measure which illustrates how the model produced higher $\FMeasure_{Epis}$, 
$\FMeasure_{Alea}$ and $\FMeasure_{Pred}$ for African-Americans and Males despite having more samples for both demographic groups within the training set.
Hence, our measure is a useful diagnostic tool in a real-world setting when clean and accurate labels are not readily available.
%
Future experiments may focus on verifying how unbiased labels may impact the point-based and uncertainty-based fairness measures.

\begin{table} [hbt!]
\caption{Label and sensitive attribute distributions of  COMPAS and Adult. B/W: Black/White. F/M: Female/Male. 
    \label{tab:real_dataset_distributions}}
  \centering\small
  \begin{tabular}{c|cc|cc}
    \toprule
     & \multicolumn{2}{c|}{\textbf{COMPAS}} & \multicolumn{2}{c}{\textbf{Adult}} \\
    \midrule
    \textbf{Group} & $Y=0$ & $Y=1$ & $Y=0$ & $Y=1$ \\
    \midrule
    B   & 1514 (48\%)  & 1661 (52\%)   & 2451 (87\%) & 366 (13\%) \\
    W   & 1281 (61\%)  & 822 (39\%)    & 19094 (74\%) & 6839 (26\%) \\
    \midrule
    F  & 762 (65\%)  & 413 (35\%)     & 8670 (89\%) & 1112 (11\%) \\
    M    & 2601 (52\%)  & 2396 (48\%)   & 13984 (69\%) & 6396 (31\%) \\
   \bottomrule
  \end{tabular}
\end{table}

\begin{table} [htbp]
\caption{Experiment 2: The analysis with Adult. Unfair values ($|\FMeasure-1|>0.2$, following \cite{zanna2022bias}) are \highlight{highlighted}. B/W: Black/White. F/M: Female/Male. \label{tab:adult_results}}
  \centering\small
  \begin{tabular}{ccccc}
    \toprule
    \textbf{Measure} & \textbf{B} ($G_0$) & \textbf{W} ($G_1$) & \textbf{F} ($G_0$) & \textbf{M} ($G_1$) \\
    \midrule
    $\uparrow$ Sample Size          & 2,817  & {25,933}    & 9,872 & {20,380} \\
    \hdashline[1pt/1pt]
    \multicolumn{5}{l}{\it\scriptsize Performance Measures}\\
    $\uparrow$ $\PMeasure_{Acc}$    & {0.86}   & 0.77    & {0.87} & 0.73\\
    $\uparrow$ $\PMeasure_{PPV}$    & 0.40   & {0.60}    & 0.39 & {0.64}\\
    $\uparrow$ $\PMeasure_{NPV}$        & {0.91}   & 0.79    & {0.91} & 0.75\\
    $\downarrow$ $\PMeasure_{FPR}$      & 0.07   & 0.07    & {0.06} & 0.08\\
    $\downarrow$ $\PMeasure_{FNR}$      & {0.67}   & 0.69    & {0.68} & 0.69\\
    $\downarrow$ $\Unce$  & 0.0001 & {6e-7*}  & 0.0001 & {6e-8*}\\
    $\downarrow$ $\Unca$  & 0.01 & {0.01}  & 0.01 & {0.01}\\
    $\downarrow$ $\Unc_p$ & 0.0007 & {0.0004}  & 0.0008 & {0.0003}\\
    \hdashline[1pt/1pt]
    \multicolumn{5}{l}{\it\scriptsize Point-based Fairness Measures}\\
    $\FMeasure_{SP}$ & \multicolumn{2}{c}{{\highlight{0.75}}} & \multicolumn{2}{c}{\highlight{0.62}} \\
    $\FMeasure_{Opp}$ & \multicolumn{2}{c}{1.08} & \multicolumn{2}{c}{1.04} \\
    $\FMeasure_{Odd}$ & \multicolumn{2}{c}{0.87} & \multicolumn{2}{c}{\highlight{0.79}} \\
    $\FMeasure_{EAcc}$ & \multicolumn{2}{c}{1.12} & \multicolumn{2}{c}{1.18} \\
    \hdashline[1pt/1pt]
    \multicolumn{5}{l}{\it\scriptsize Uncertainty-based Fairness Measures (Ours)}\\
    {$\FMeasure_{Epis}$} & \multicolumn{2}{c}{\highlight{151*}} & \multicolumn{2}{c}{\highlight{521*}} \\
    {$\FMeasure_{Alea}$} & \multicolumn{2}{c}{$1.00$} & \multicolumn{2}{c}{$1.00$} \\
    {$\FMeasure_{Pred}$} & \multicolumn{2}{c}{\highlight{1.49}} & \multicolumn{2}{c}{\highlight{2.65}} \\
    \bottomrule
  \end{tabular}
\end{table}


\subsubsection{The Adult Dataset}

As with COMPAS, Black and Female are the minority groups ($G_0$) across race and gender respectively. 
As listed in Table \ref{tab:real_dataset_distributions}, Adult has severe imbalance across labels and groups. 
Across \textbf{race}, Table 
\ref{tab:adult_results} 
shows we only have $2817$ samples for African-Americans vs. $25933$ for Whites. 
However, according to the performance and point-based fairness measures, the fairness gap between the African-Americans and Whites is lower compared to that in COMPAS. 
The uncertainty-based fairness measures provide some interesting insights. Particularly, we observe a surprisingly large fairness gap in terms of epistemic uncertainty, $\FMeasure_{Epis}$ = $151$. This is not surprising since Whites have $10\times$ more samples, yielding very small $\mathcal{\Unc}_e$ value.
Aleatoric uncertainty $\mathcal{\Unc}_a$ values for both groups are very small (compared to all other datasets), which suggest that the classifier has more certainty with respect to data noise, yielding $\FMeasure_{Alea}\sim 1.00$.

The fairness gap also seems lower across \textbf{gender} in Adult compared to COMPAS.
There is also class imbalance across gender, with only $9872$ samples for females vs. $20380$ for males.
We observe conflicting outcomes across the point-based fairness measures: $\FMeasure_{Opp}$ = $1.04$ and $\FMeasure_{EAcc}$ = $1.18$ point to \textit{fair} classification whereas $\FMeasure_{SP}$ = $0.62$ and $\FMeasure_{Odd}$ = $0.79$ suggest otherwise. Similar to the race attribute, $\FMeasure_{SP}$ = $0.62$ implies 
higher salary classification
bias in favour of Males. 
As for epistemic and aleatoric uncertainties, we observe gaps similar to the race attribute: There is significant bias in terms of $\FMeasure_{Epis}$ (against Males), despite the dataset containing more Male samples. Moreover, the model appears to have the same level of aleatoric certainty across gender ($\FMeasure_{Alea} \sim 1.00$).  

\paragraph{Social Impact:}
The point-based measures seem to indicate that the outcome is acceptably fair which is non-indicative of the underlying problem, i.e., the model is still unsure of its prediction of the majority class despite having more samples on them.
%
Hypothetically,
this could lead to prediction bias when encountering \textit{real-world} issues such as missing data \cite{goel2021importance} and distributional shifts \cite{chen2022fairness}.
%
Future experiments can be conducted to verify how such real-world challenges, e.g., missing data and distributional shifts, may impact the uncertainty-based fairness measures.
%
Our uncertainty-based fairness measures managed to 
highlight this discrepancy
across both race and gender which could encourage pre-emptive efforts to further investigate the underlying source of bias in the model before deploying them in real-world settings.

\subsubsection{D-Vlog Dataset} 
\label{sect:additional_DVlog}
\paragraph{D-Vlog Truncation Statistics}

Table \ref{tab:dvlog_duration_dist} shows that Female videos are truncated significantly, which leads to loss of information and an increase of uncertainty in predictions \cite{cheong2023towards}.
\begin{table} [hbt!]
\caption{Label, duration and sensitive attribute distributions of D-Vlog. Both average duration and average truncated amount are given in seconds. Absolute value of the entries with negative value in the last row shows the amount of zero padding whereas the positive values directly state the amount of truncation.
    \label{tab:dvlog_duration_dist}}
  \centering\small
  \begin{tabular}{c|cc|cc}
    \toprule
     & \multicolumn{2}{c|}{\textbf{Male}} & \multicolumn{2}{c}{\textbf{Female}} \\
    \midrule
    \textbf{} & $Y=0$ & $Y=1$ & $Y=0$ & $Y=1$ \\
    \midrule
    \# Samples   & 140 (0\%) & 182 (0\%)   & 2666 (0\%) & 373 (0\%) \\
    Avg. Duration(s)  & 483   & 583      & 587 & 667 \\
    Avg. Truncation(s)  & -158  & -13     & -9 & +71 \\
   \bottomrule
  \end{tabular}
\end{table}
\begin{table}[htb!]
  \caption{Experiment 2: The analysis with D-Vlog. Unfair values are \highlight{highlighted}. F/M: Females/Males. $G_0$: Females. \label{tab:dvlog_results}
    }
  \centering\small
  \begin{tabular}{ccccccc}
    \toprule
    & \multicolumn{2}{c}{{Multi-Modal}} & \multicolumn{2}{c}{{Audio Only}} & \multicolumn{2}{c}{{Visual Only}}\\
    \midrule
    \textbf{Measure}  & \textbf{F} & \textbf{M} & \textbf{F} & \textbf{M} & \textbf{F} & \textbf{M} \\
    \midrule
    $\uparrow$ Sample Size  & {639}  & 322 & {639}  & 322 & {639}  & 322 \\
    \hdashline[1pt/1pt]
    \multicolumn{7}{l}{\it\scriptsize Performance Measures}\\
    $\uparrow$ $\PMeasure_{Acc}$     & 0.59 & 0.73 & {0.63}   & 0.75    & {0.63} & 0.66\\
    $\uparrow$ $\PMeasure_{PPV}$     & 0.62 & 0.78 & 0.72   & {0.82}    & 0.65 & {0.69}\\
    $\uparrow$ $\PMeasure_{NPV}$     & 0.54 & 0.66 & {0.56}   & 0.66    & {0.61} & 0.57\\
    $\downarrow$ $\PMeasure_{FPR}$   & 0.57& 0.38 & 0.29   & 0.28    & {0.54} & 0.61\\
    $\downarrow$ $\PMeasure_{FNR}$   & 0.28 & 0.20 & {0.43}   & 0.22    & {0.23} & 0.18\\
    $\downarrow$ $\Unce$    & 0.006 & 0.006 & 0.022 & 0.016  & 0.034 & {0.035}\\
    $\downarrow$ $\Unca$   & 0.45 & 0.45 & 0.28 & 0.22  & 0.10 & {0.09}\\
    $\downarrow$ $\Unc_p$  & 0.46 & 0.46 & 0.31 & 0.24  & 0.14 & {0.13}\\
    \hdashline[1pt/1pt]
    \multicolumn{7}{l}{\it\scriptsize Point-based Fairness Measures}\\
    $\FMeasure_{SP}$  & \multicolumn{2}{c}{{1.01}} & \multicolumn{2}{c}{{\highlight{0.75}}} & \multicolumn{2}{c}{0.91}\\
    $\FMeasure_{Opp}$  & \multicolumn{2}{c}{0.89} & \multicolumn{2}{c}{\highlight{0.73}} & \multicolumn{2}{c}{0.94} \\
    $\FMeasure_{Odd}$  & \multicolumn{2}{c}{{\highlight{1.68}}} & \multicolumn{2}{c}{\highlight{1.40}} & \multicolumn{2}{c}{0.94}\\
    $\FMeasure_{EAcc}$  & \multicolumn{2}{c}{0.81} & \multicolumn{2}{c}{0.84} & \multicolumn{2}{c}{0.96}\\
    \hdashline[1pt/1pt]
    \multicolumn{7}{l}{\it\scriptsize Uncertainty-based Fairness Measures (Ours)}\\
    {$\FMeasure_{Epis}$} & \multicolumn{2}{c}{{1.00}} & \multicolumn{2}{c}{\highlight{1.38}} & \multicolumn{2}{c}{0.96} \\
    {$\FMeasure_{Alea}$} & \multicolumn{2}{c}{{1.00}} & \multicolumn{2}{c}{\highlight{1.32}} & \multicolumn{2}{c}{1.11}\\
    {$\FMeasure_{Pred}$} & \multicolumn{2}{c}{{1.00}} & \multicolumn{2}{c}{\highlight{1.32}} & \multicolumn{2}{c}{1.07}\\
    \bottomrule
  \end{tabular}
    
    
\end{table}
As the dataset owners have explored both multi-modal and uni-modal architectures, we analyze D-Vlog both in a multi-modal and in a uni-modal manner. Table \ref{tab:dvlog_results} provides the experimental results. Both point-based and uncertainty-based fairness measures deem the classifier to be fair (except for $\FMeasure_{Odd}$). Uncertainty-based fairness results are especially surprising since the Female group size is twice the size of the Male group. However, we observe that the classifier has high aleatoric uncertainty for both groups.

The results per modality suggest that the audio modality has strong bias against Females since the performance measures are generally lower for Females. This, however, is not coherently captured by point-based measures whereas our uncertainty-based measures consistently highlight the bias. The cause of this bias appears to be the truncation of the videos by the dataset owners: Recordings of Females are significantly longer and therefore, truncated more. 
This naturally results in more reduction in information useful for the classification task for females, thus increasing the uncertainty for females. However, this effect is not observed across the visual modality 
as the classifier performs poorly across both males and females.


\subsection{Experiment 3: Individual Fairness}

We now analyze individual fairness with point-based and uncertainty measures for COMPAS. 
The results in Fig. \ref{fig:indv_fairness_scores} suggest that $\FMeasure^{indv}_{\hat{y}}$ values differ across different groups of race and gender as well as outcomes. This is also evident with the uncertainty-based individual fairness measures ($\FMeasure^{indv}_{\Unc}$). However, although ``W(-)" and ``B(-)" have similar point-based consistencies, they are very different across both aleatoric and epistemic consistencies. Aleatoric consistencies align with $\FMeasure^{indv}_{\hat{y}}$ that the classifier is having difficulty with ``B(+)" samples. $\FMeasure^{indv}_{\Unce}$ values highlight that ``M(+)" and ``B(+)" might especially benefit from additional data. 

\subsubsection{Experiment 3: Individual Fairness Analysis on Adult}
\label{sect:additional_indv_fairness}

With reference to Figure \ref{fig:indv_fairness_scores_adult}, we see that $\FMeasure^{indv}_{\hat{y}}$ for both race and gender are largely similar.
This is also true for the the Uncertainty-based individual fairness measures $\FMeasure^{indv}_{\Unca}$ and $\FMeasure^{indv}_{\Unce}$.
A noteworthy point is that all measures indicate an interesting insight about the positive classes, (``B+", ``W+", ``F+" and ``M+").
All of them point towards a perfect consistency score of $\approx1$.
We hypothesize that this might be due to the severe class imbalance within the Adult dataset where there is a very small subgroup that belongs to the positive class $Y=1$ thus causing the classifier to memorize and be highly confident about the $\hat{Y}=1$ predictions. 
This is also supported by the small (~0.07) FPR reported in Table 4 (the main manuscript) for all groups. We see slightly lower consistency for negative classes in point predictions and aleatoric uncertainty. The high FNR rate for both groups (Table 4 in the main text) suggests that there are more errors with $\hat{Y}=0$ predictions, causing higher inconsistencies for those predictions. Lower consistencies for ``F-" and ``B-" for epistemic uncertainty suggest more data can be helpful for Female and Black groups, which is supported by the dataset distribution highlighted in Table \ref{suptab:compas_fairness_results}. 

\paragraph{Social Impact:} Considering uncertainty in individual fairness can be crucial in many applications. For instance, cancer-free prognosis of a patient should take into account uncertainty-based consistency for similar individuals.

\begin{figure}[hbt!]
  \centerline{
  {
    \includegraphics[width=0.38\columnwidth]{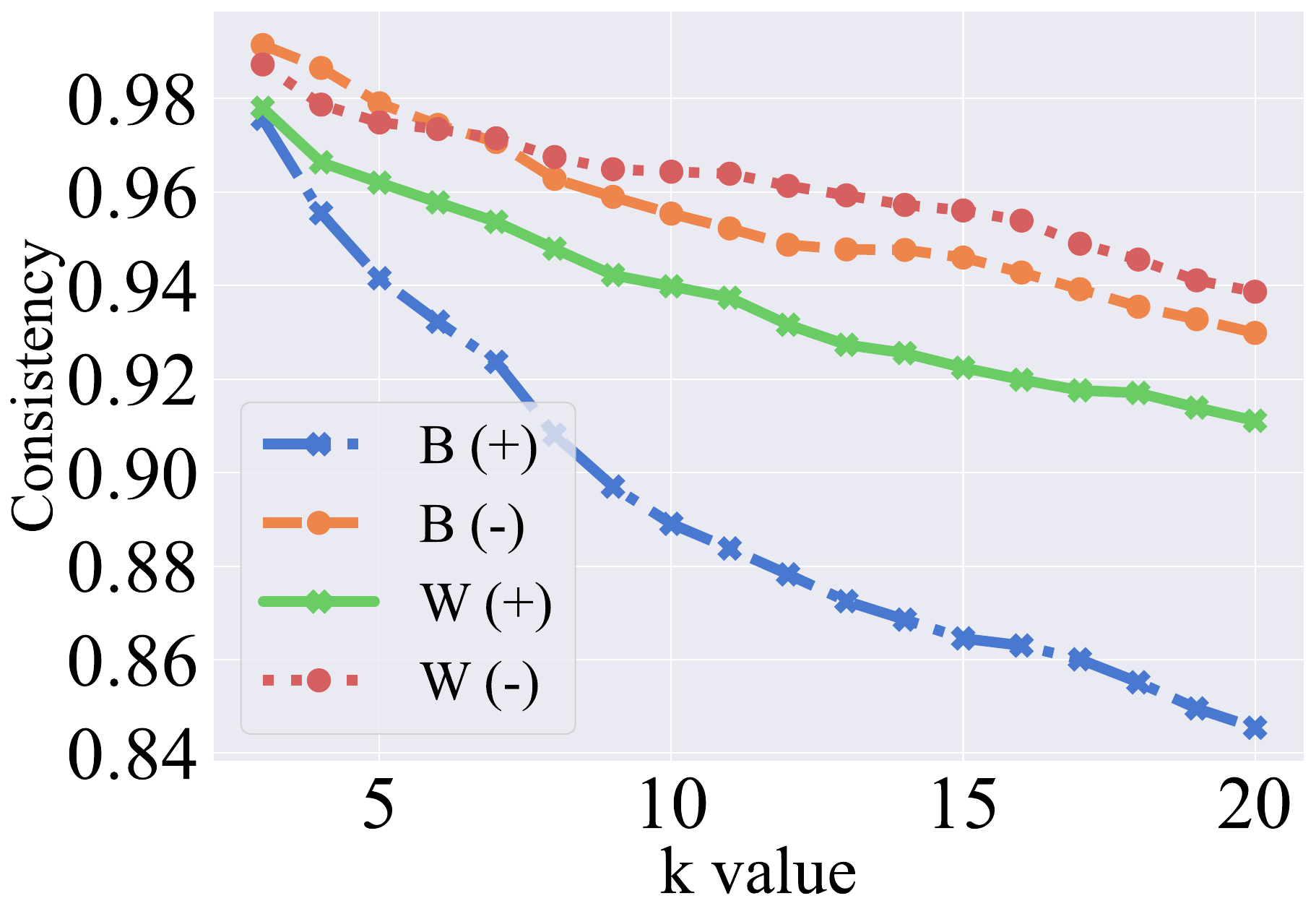}
    \label{fig:compas_race_point}
 }
 {
    \includegraphics[width=0.38\columnwidth]{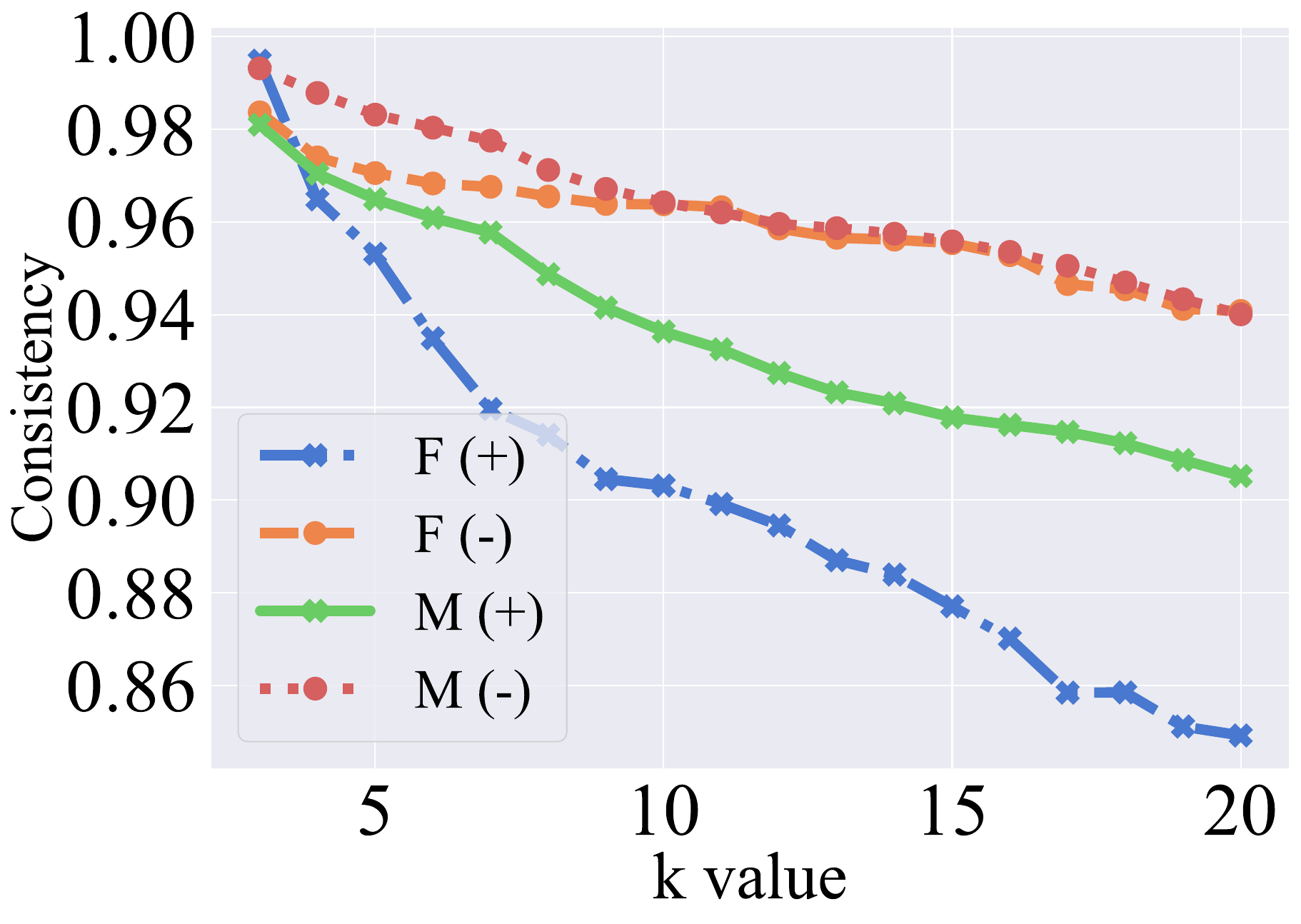}
    \label{fig:compas_gender_point}
  }
  }
  \centerline{
  {
    \includegraphics[width=0.38\columnwidth]{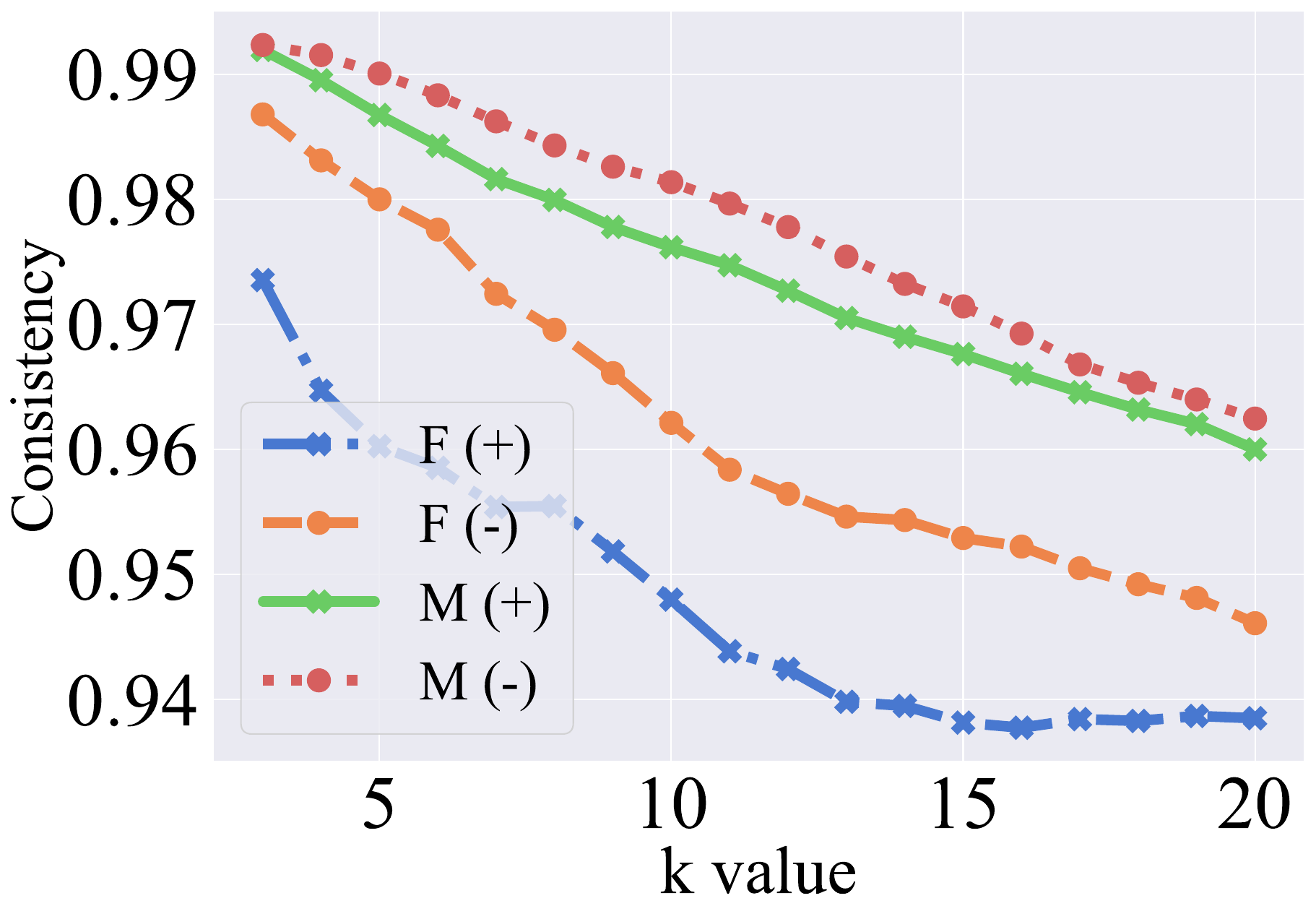}
    \label{fig:compas_gender_aleatoric_indie}
  }
  {
    \includegraphics[width=0.38\columnwidth]{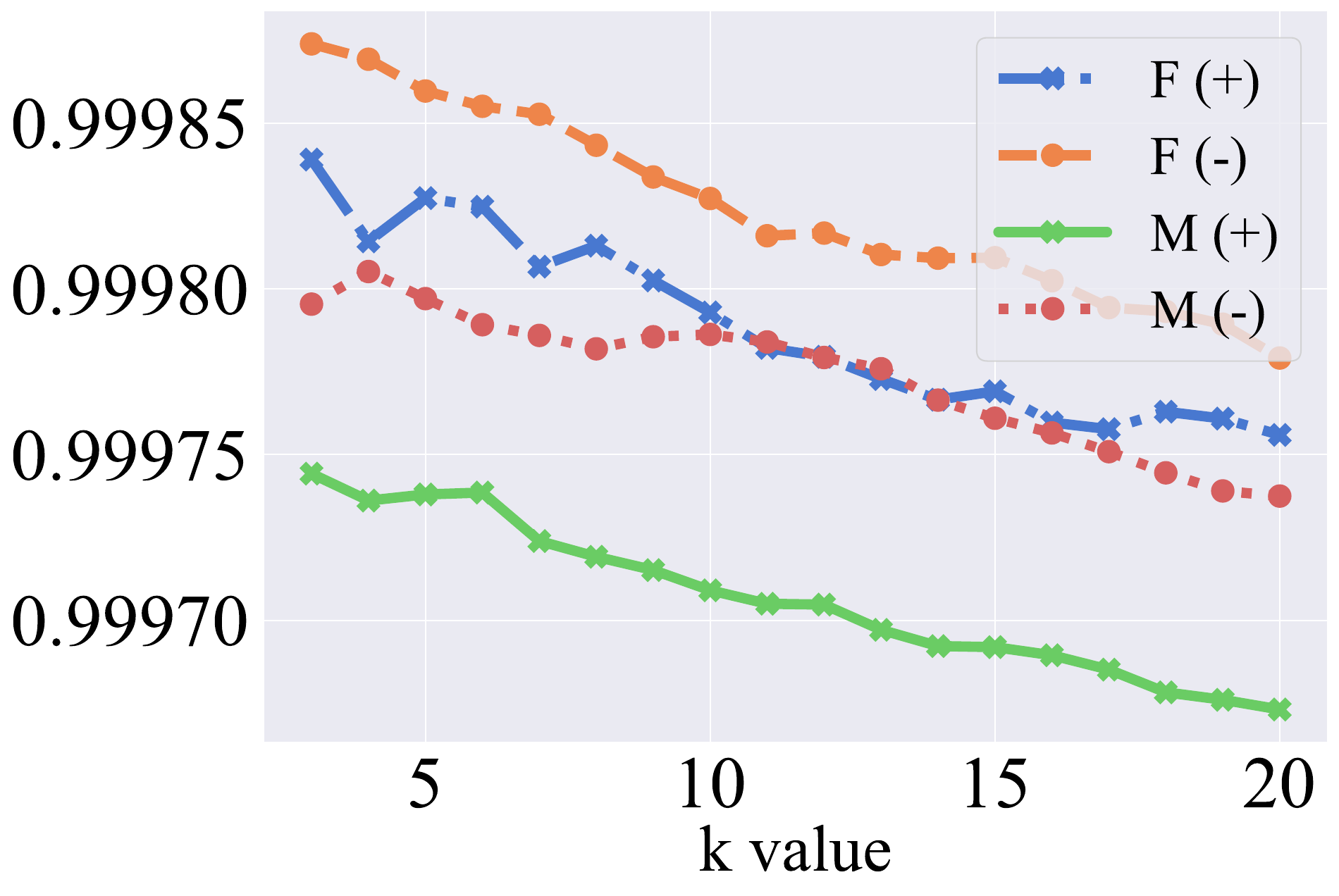}
    \label{fig:compas_gender_epistemic_inde}
  }
  }
  \centerline{
  {
    \includegraphics[width=0.38\columnwidth]{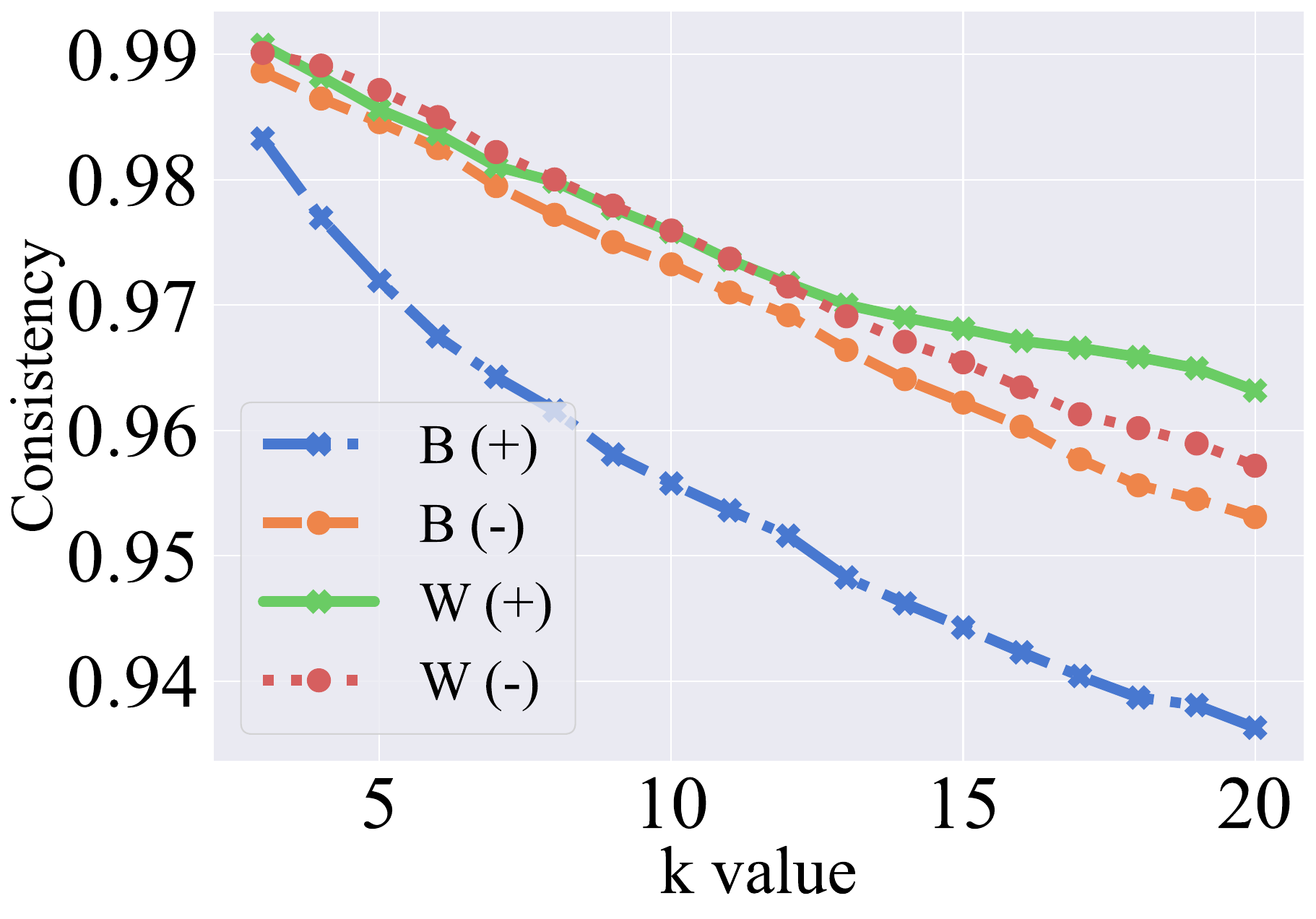}
    \label{fig:compas_race_aleatoric_indie}
 }
 {
    \includegraphics[width=0.38\columnwidth]{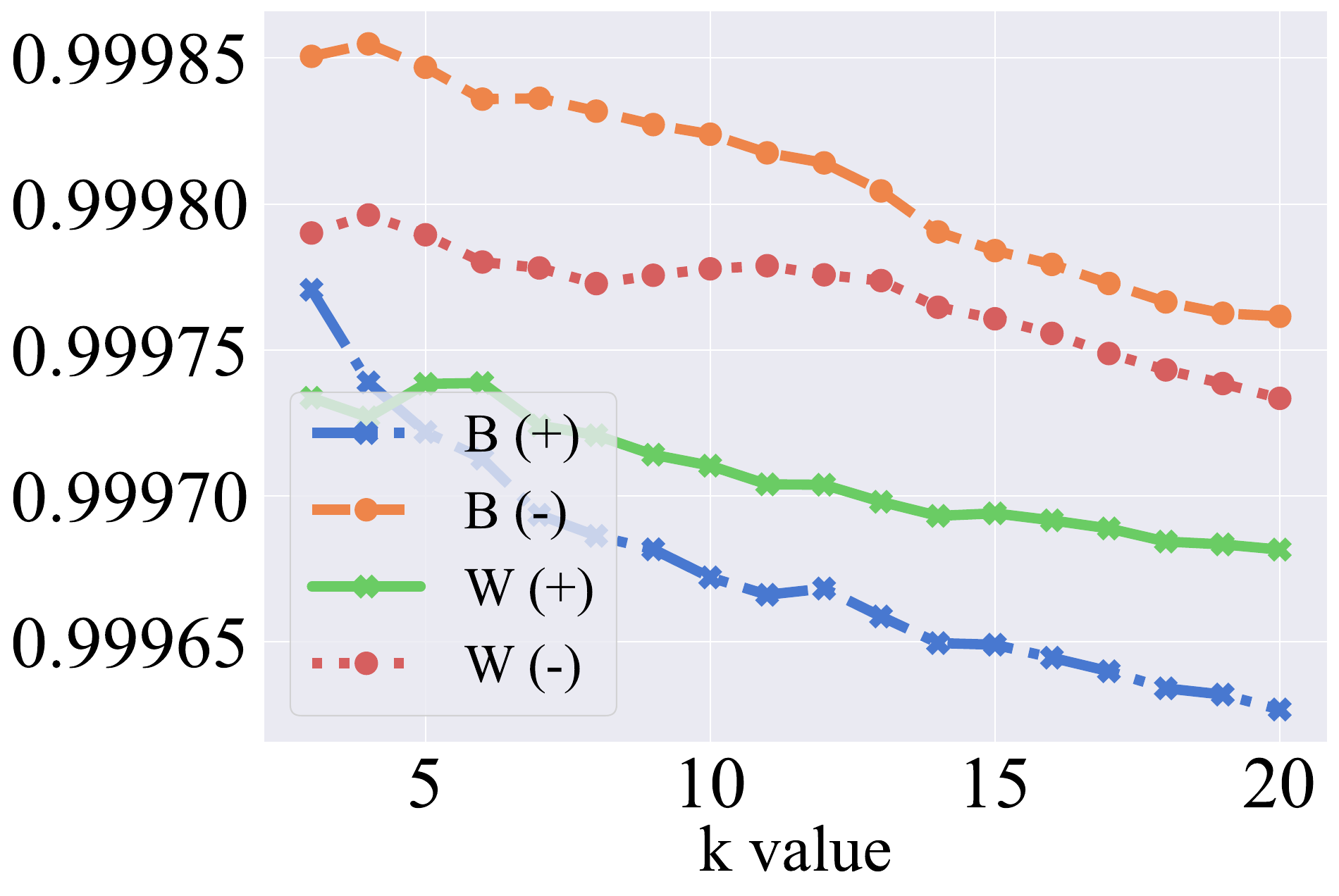}
    \label{fig:compas_race_epistemic_indie}
  }
  }
\caption{Experiment 3: Point-based \textbf{(a,b)} and uncertainty-based individual fairness \textbf{(c-f)} scores for COMPAS.} 
\label{fig:indv_fairness_scores} 
\end{figure}

\subsection{Experiment 4: Ablation Analysis}
\label{sect:additional_Ablation}
We analyze the effect of model capacity on performance and uncertainty estimations. The results in Fig. \ref{fig:compas_ablation} show that the uncertainty estimations are affected by the change in the number of neurons per layer. However, the relative ordering between the different demographic groups do not appear to be affected. Since accuracy appears to saturate after 100 neurons and to lower the computational cost, we have chosen the hidden layer sizes as 100 in all experiments.
Adding more layers led to significant over-fitting problems for SD1, SD2, SD3, COMPAS, and Adult datasets. Therefore, we performed the rest of the experiments with a single hidden-layer for COMPAS and no hidden-layer for SD1, SD2, SD3 and Adult.

\begin{figure}[hbt!]
  \centerline{
  {
    \includegraphics[width=0.38\columnwidth]{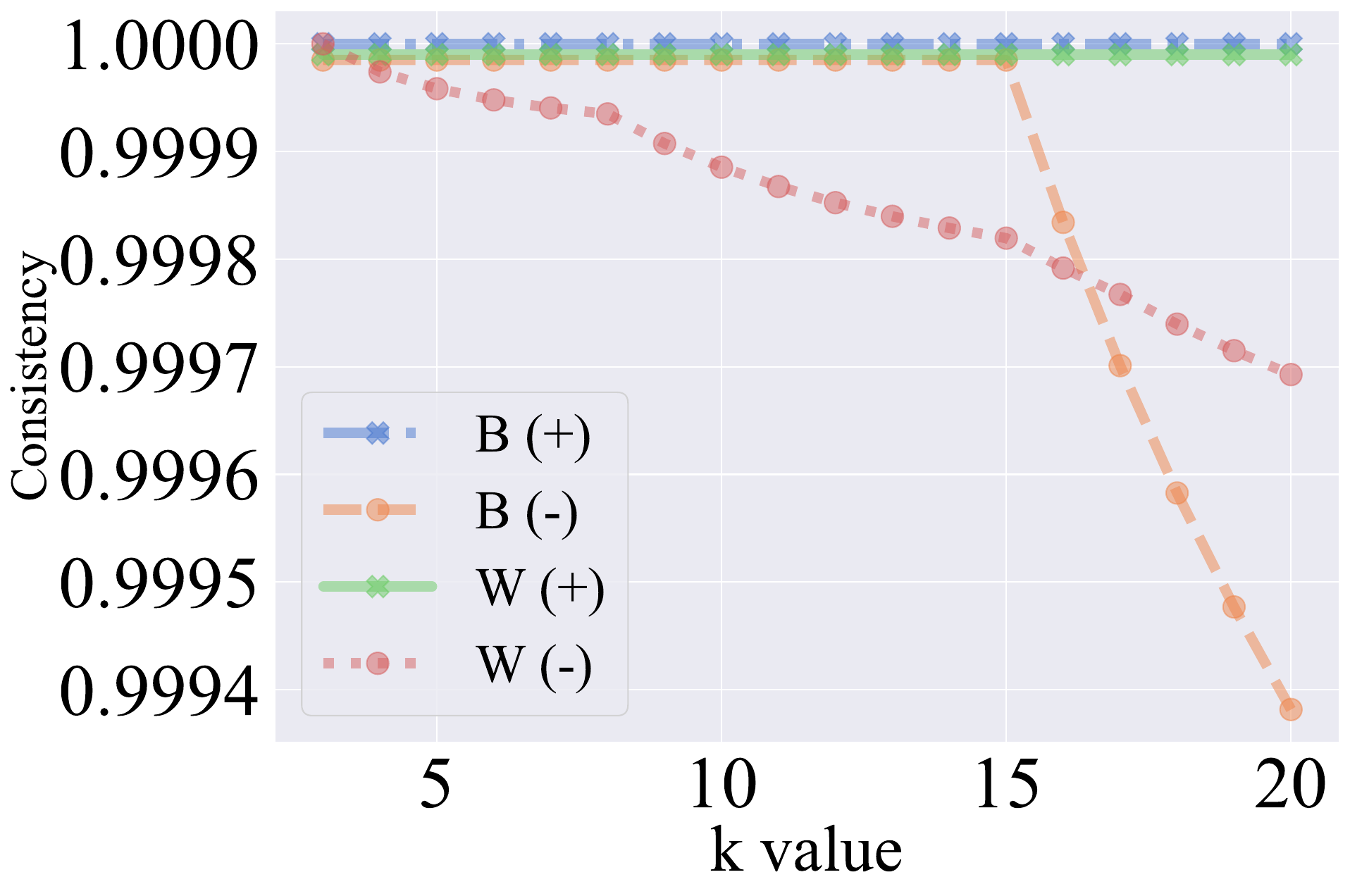}
    \label{fig:adult_race_point}
 }
 {
    \includegraphics[width=0.38\columnwidth]{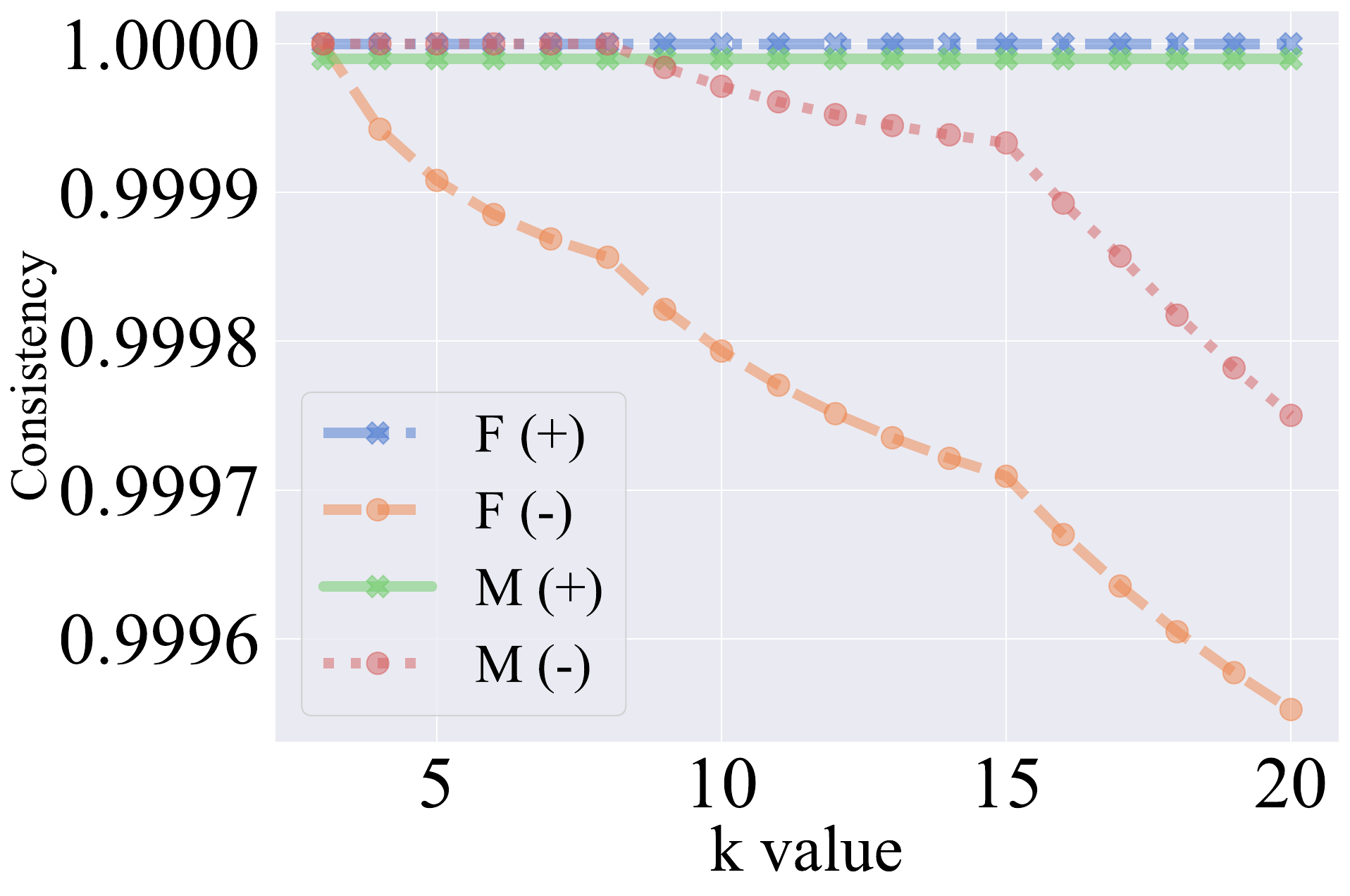}
    \label{fig:adult_gender_point}
  }
  }
  \centerline{
  {
    \includegraphics[width=0.38\columnwidth]{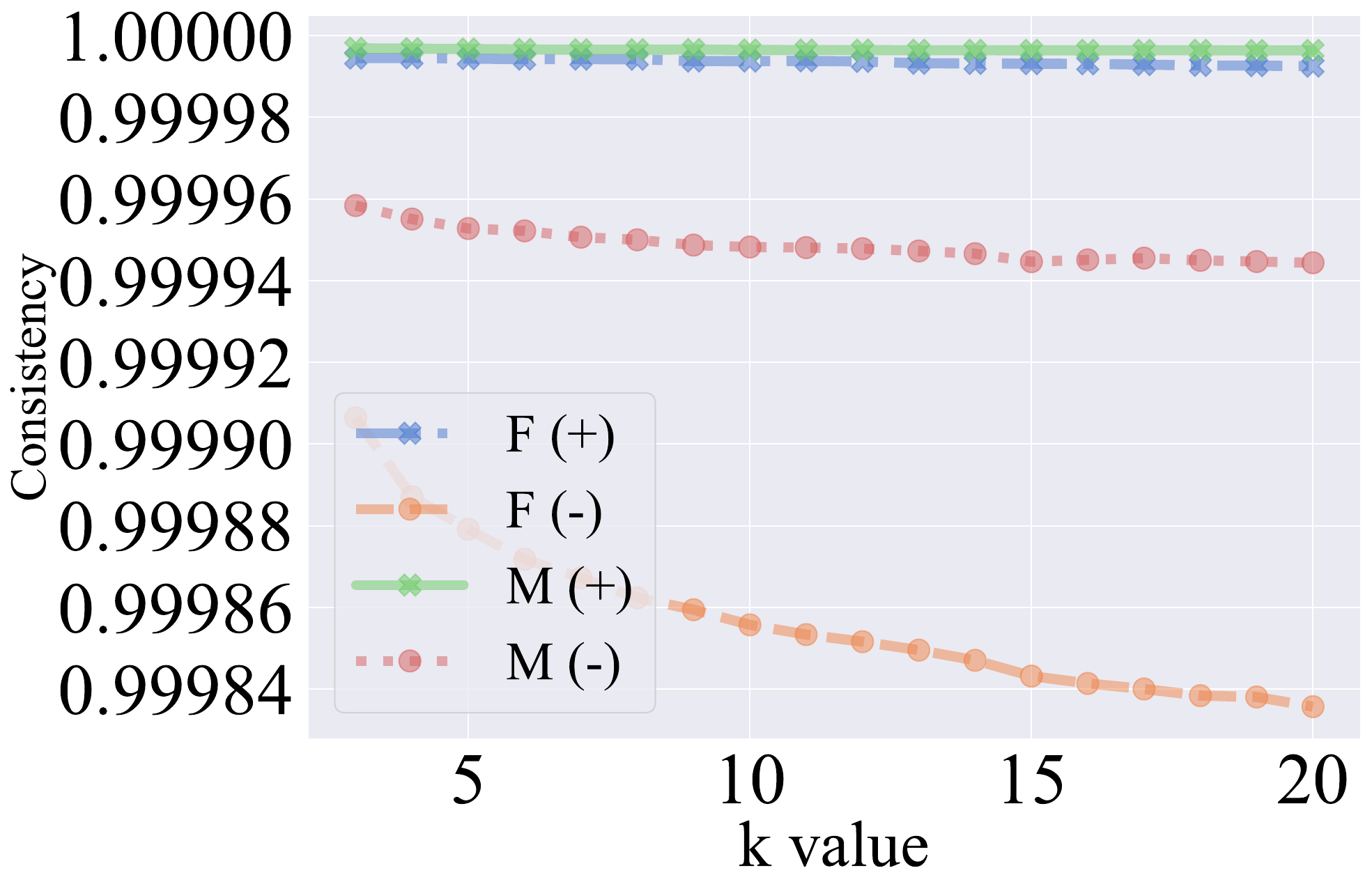}
    \label{fig:adult_gender_aleatoric_indie}
  }
  {
    \includegraphics[width=0.38\columnwidth]{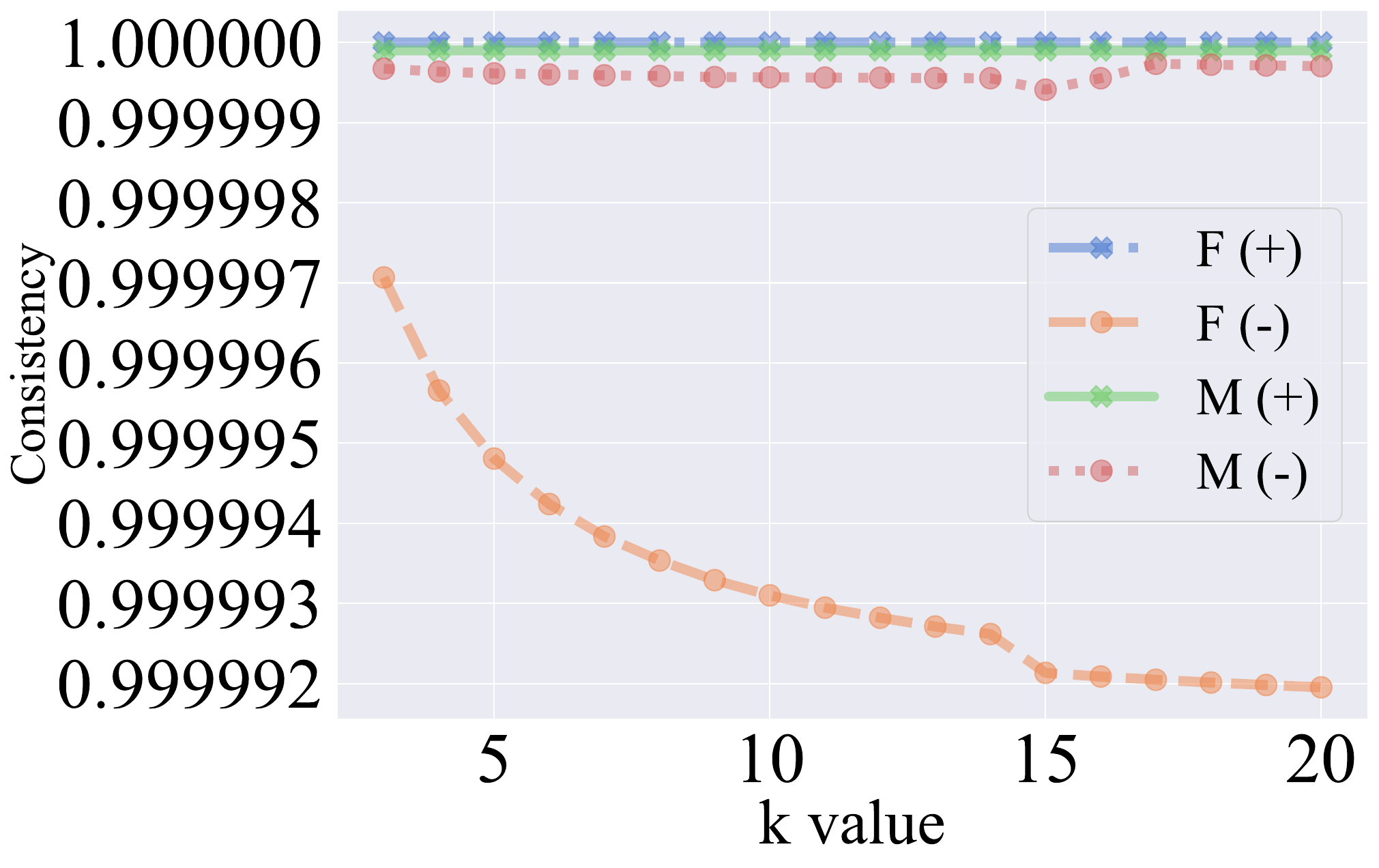}
    \label{fig:adult_gender_epistemic_inde}
  }
  }
  \centerline{
  {
    \includegraphics[width=0.38\columnwidth]{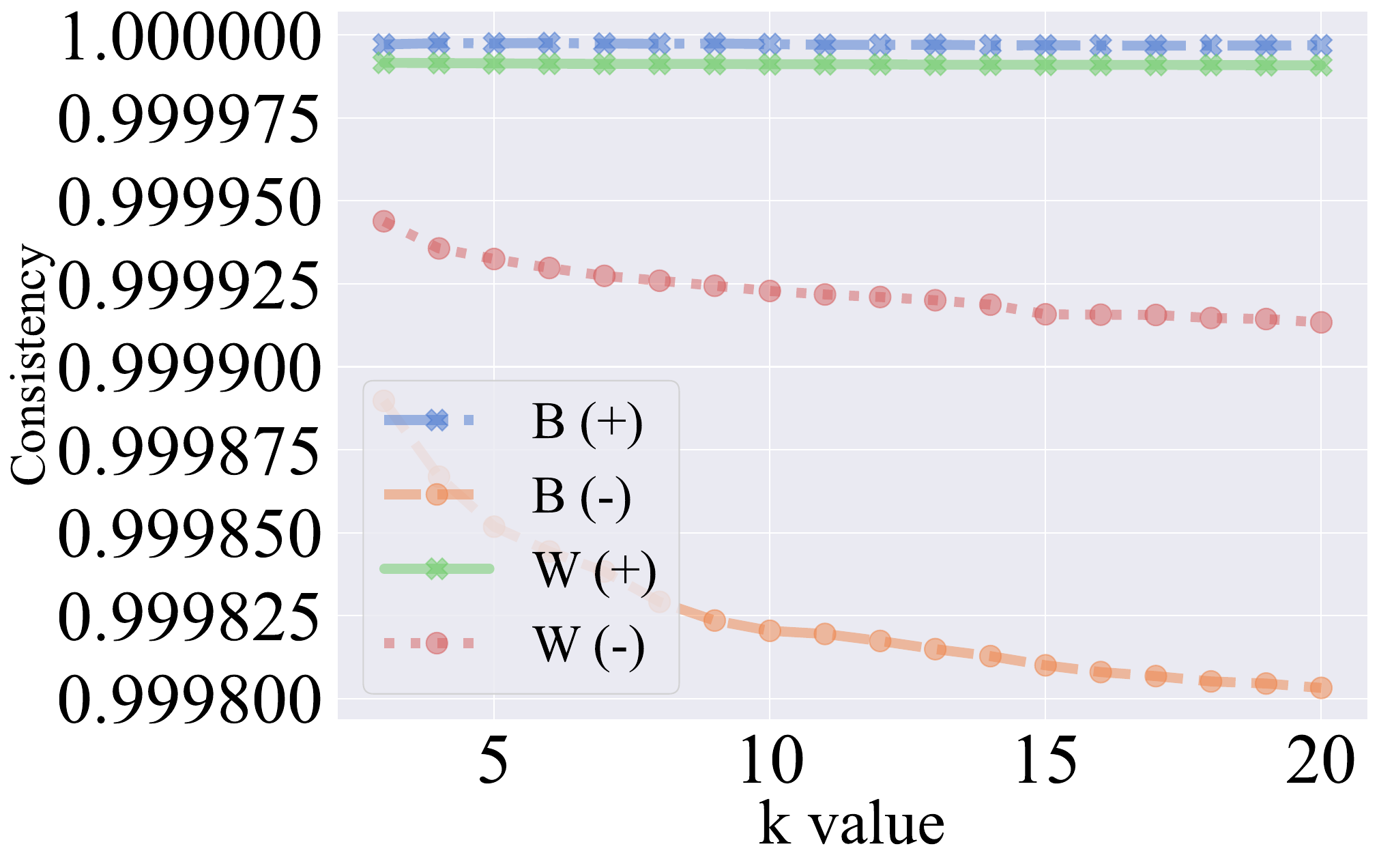}
    \label{fig:adult_race_aleatoric_indie}
 }
 {
    \includegraphics[width=0.38\columnwidth]{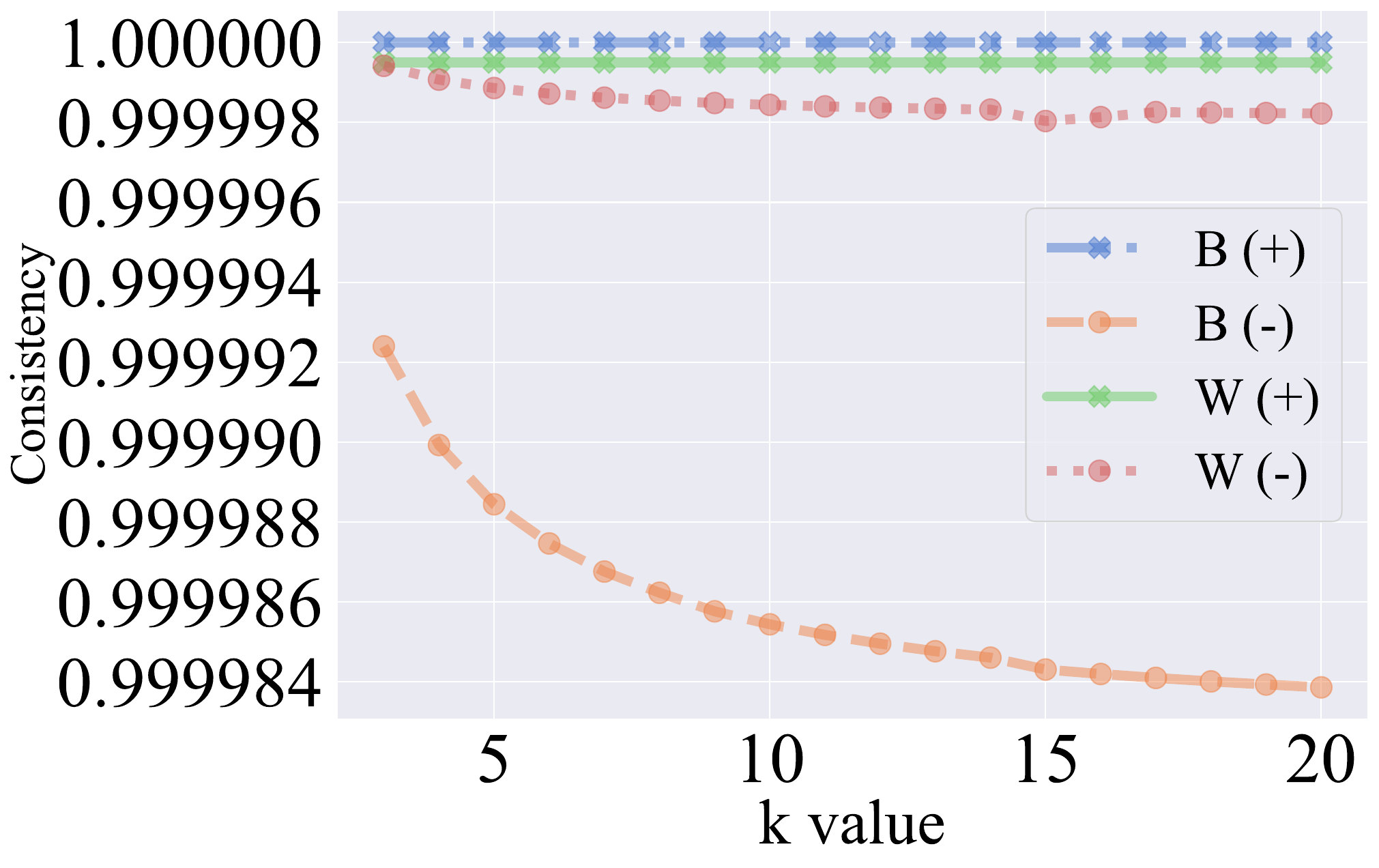}
    \label{fig:adult_race_epistemic_indie}
  }
  }
  
\caption{Experiment 3: Point-based \textbf{(a,b)} and uncertainty-based individual fairness \textbf{(c-f)} scores for Adult. 
}
\label{fig:indv_fairness_scores_adult}
\end{figure}

\begin{figure}[hbt!]
\centering
  \subfigure[Accuracy]
  {
    \includegraphics[width=0.31\columnwidth]{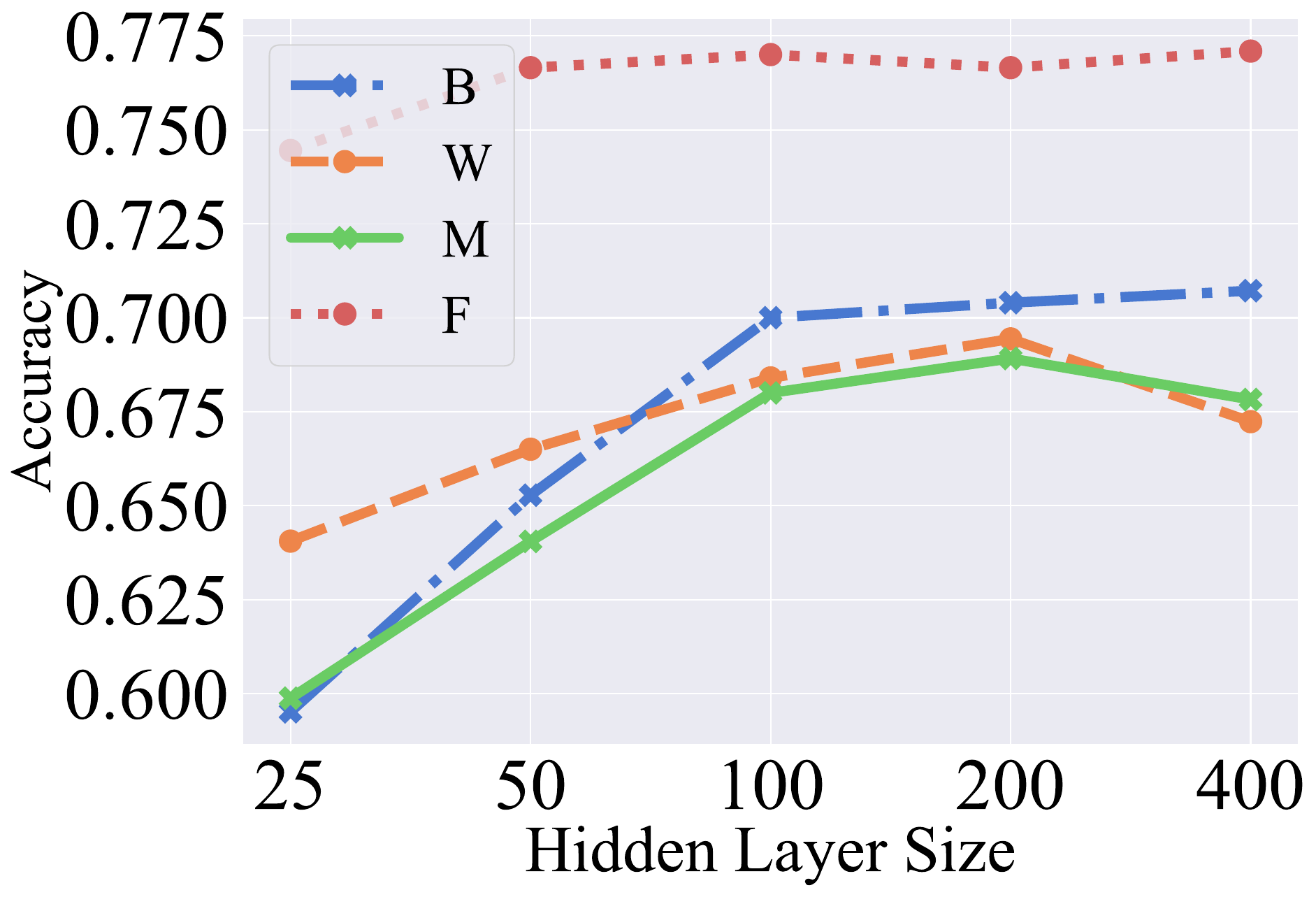}
    \label{fig:compas_ablation_accuracy}
  }
  \subfigure[Aleatoric Uncertainty]
  {
    \includegraphics[width=0.31\columnwidth]{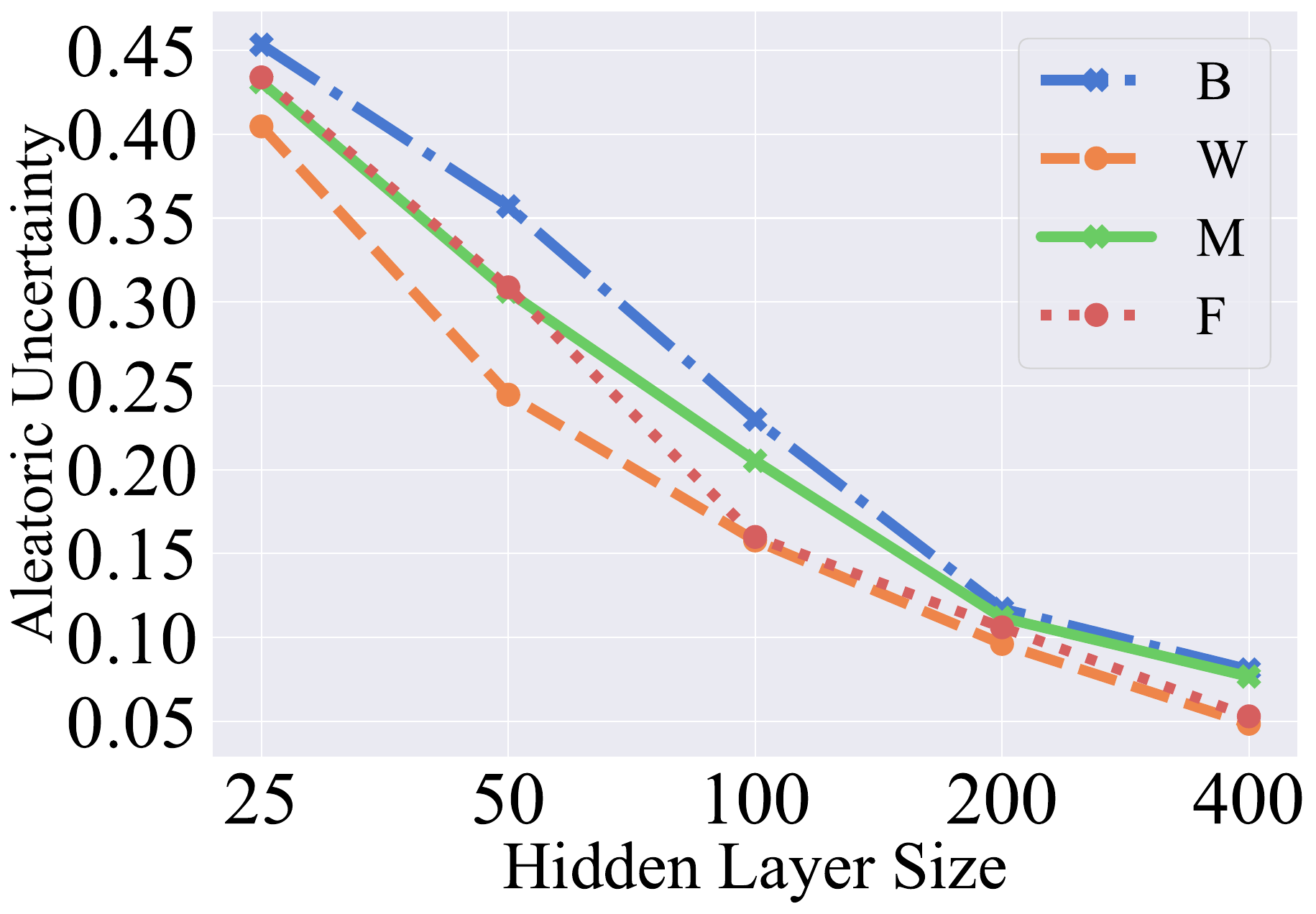}
    \label{fig:compas_ablation_aleatoric}
  }
  \subfigure[Epistemic Uncertainty]
  {
    \includegraphics[width=0.31\columnwidth]{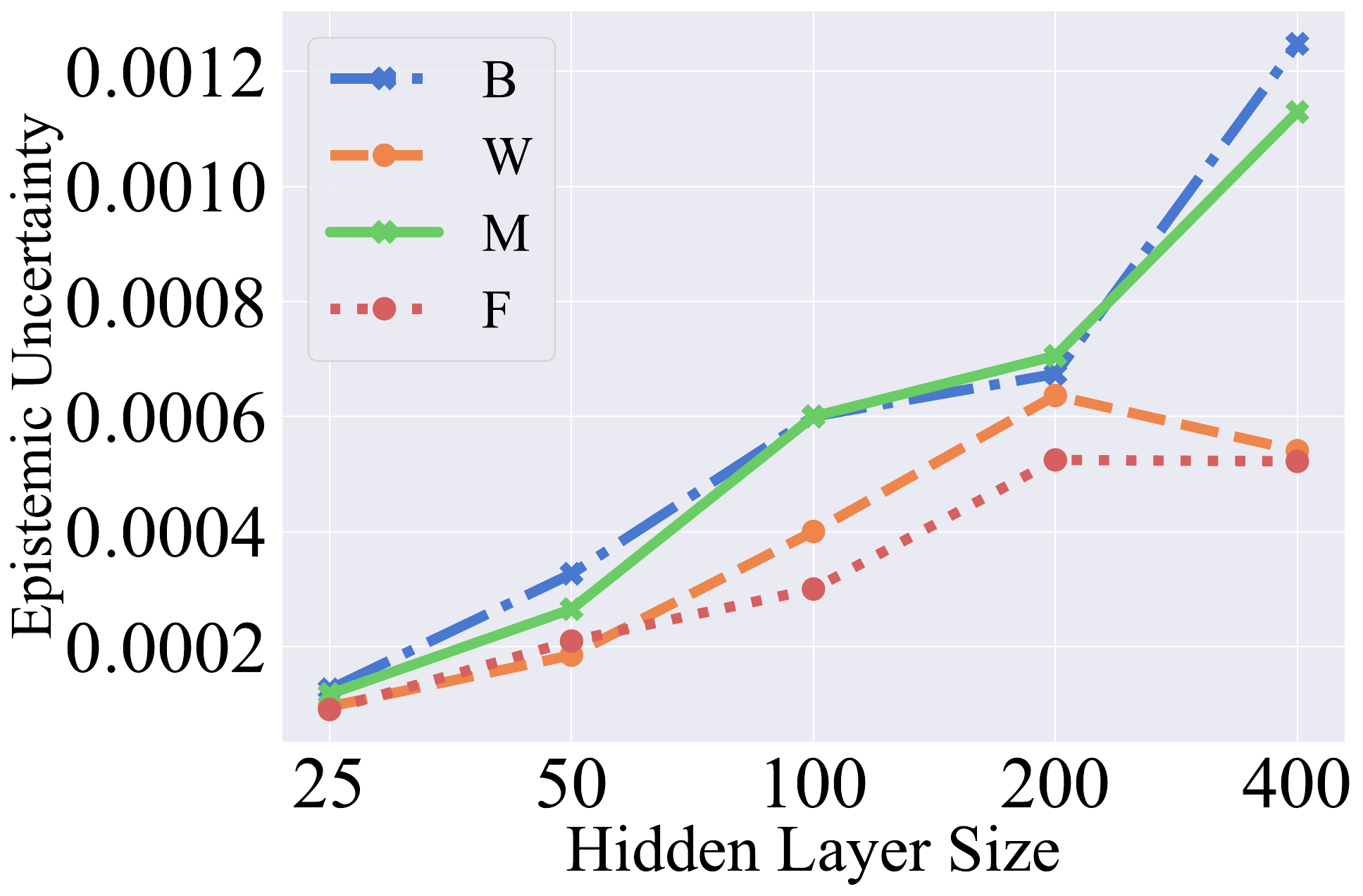}
    \label{fig:compas_ablation_epistemic}
 }
\caption{Experiment 4: Ablation analysis on the effect of model capacity on COMPAS. \textbf{(a)} Accuracy. \textbf{(b)} Aleatoric uncertainty. \textbf{(c)} epistemic uncertainty. B/W: Black/White. F/M: Female/Male. As model performance starts saturating at 100 neurons, we have used 100 neurons in BNNs.
}
\label{fig:compas_ablation}
\end{figure}

\section{Discussion} 
\label{sect:conclusion}

In this paper, we have argued that existing point-based fairness measures may not be reliable as they depend on point predictions of the ML models and ignore their uncertainties. 
To address this limitation, we introduce the use of different types of uncertainty as fairness measures. We prove that the proposed fairness measures are independent of point-based fairness measures and empirically show that uncertainty-based fairness measures provide more insights about the presence and the source of bias in predictions. 

\subsection{Main Insights}
In the following, we summarize the main insights:

\paragraph{Insights through the Epistemic Fairness Measure ($\FMeasure_{Epis}$)} Measuring the fairness gap in terms of epistemic uncertainty, by definition, highlights the lack of data for one group. What is beneficial is that this is not affected by the mere number of samples, which can be misleading. For example, in COMPAS, Black and Male groups have significantly more samples. However, both groups still witness higher $\Unc_e$ values. This suggests that the dataset may contain data-level bias. The dataset distribution confirms that there is a class-imbalance problem for these groups, which can be remedied with more data.

\paragraph{Insights through the Aleatoric Fairness Measure ($\FMeasure_{Alea}$)} Aleatoric uncertainty reflects the hardness of a problem owing to label or data noise (e.g., occlusion) \cite{kendall2017uncertainties}. The use of this informative measure has shown that the classification task is harder for some groups. For example, in D-Vlog, we see that truncating videos has increased aleatoric uncertainty for females.

Another prominent example is that of COMPAS. For instance, despite being a frequently used benchmark for fairness evaluation, an oft-cited key limitation of COMPAS is that errors in typography is a major flaw in this dataset \cite{rudin2019stop}.
Uncertainty can, by definition, capture some of the typography and data issues, which would be missed by point-based measures. 
As evidenced in Table \ref{suptab:compas_fairness_results}, 
uncertainty-based fairness measures provided some
insights about the roots of bias and can be used in conjunction with point-based measures.

\subsection{Social Impact}
There is a rapid increase in bias mitigation methods for the past years \cite{hort2022bia}. 
However, it is unclear which source of bias each method is intended to address. In fact, recent work has demonstrated that if bias is due to missing values, existing bias mitigation methods often reduce (point-based) performance disparities at the cost of accuracy
\cite{wang2023aleatoric}.
Our contribution lies in leveraging existing uncertainty measures to quantify an alternative aspect of fairness. That said, probing a model by adding noise or perturbations to its inputs is useful in analyzing model robustness or increasing model robustness if noise or perturbations are added during training. Epistemic and aleatoric uncertainties, on the other hand, pertain to how well the model captures the lack of data and the absence of noise (ambiguity) respectively. Given a dataset and a model, both types of uncertainties are supposed to be irreducible.
In such an instance, using point-based measures will likely be sub-optimal. Our proposed uncertainty-based measure highlights this underlying problem and cautions against foisting a ``fair'' outcome using point-based fairness measures. 

Moreover, many of the existing bias mitigation solutions rest on strict machine learning assumptions such as 
having access to clean or noise-free labels 
and requiring the model to be deployed in a fair environment that does not deviate from the training setting \cite{kang2022certifying}.
This is optimistic at best and harmful at worst.
This incongruence between theoretical formulation and real-world settings is one of the handicaps that the machiner learning fairness research community needs to overcome.
Our work also highlights the need to develop methods which are able to address epistemic and aleaoteric sources of discrimination. 
We hope that the proposed uncertainty-based fairness measures present a step towards that direction.

\subsection{Limitations} 
Despite its merits, uncertainty-based fairness measures require working with models which provide or can be modified to provide uncertainties. 
Moreover, quantifying uncertainty is an active research area, and in this work we have not been able to undertake a thorough evaluation of different uncertainty quantification methods. 
The above provides opportunities for future work. 
%
A key point to note is that the uncertainty-fairness measures in our paper are differentiable and can be converted to a loss function. However, forcing epistemic and aleatoric uncertainties to be similar across groups or individuals will not necessarily change the “real uncertainties” as these measures simply reflect issues inherent in the data and noise (or ambiguity).

Although prediction uncertainty can be helpful in analyzing fairness, this approach has certain limitations which we view as opportunities for future work. For example, uncertainty estimation requires either using models that directly provide multiple predictions (e.g., BNNs, Deep Ensembles) or modifying models (and their training procedure) to do so (e.g., Monte Carlo Dropout \cite{gal2016dropout}). This hinders the use of state-of-the-art architectures (or their trained versions) in fairness analysis. Moreover, there is also the overhead involved with obtaining multiple predictions to quantify uncertainty. This can be alleviated with one-pass uncertainty estimation approaches, though they tend to be less reliable than the approaches considered in this paper {\cite{abdar2021review}}.

Quantifying uncertainty in a reliable manner is a challenging and an active research topic \cite{mukhoti2023ddu,liu2020sngp,amersfoort2020duq}. Although we have obtained similar outcomes with two different methods (BNNs and Deep Ensembles), we have encountered difficulties with the ranges of estimated uncertainties. It would be beneficial to perform our analyses with newer approaches. Another promising research direction is to consider alternative metrics for measuring the dispersion of uncertainty values in a group as taking the average across a group can miss important characteristics of the distribution.
Despite the aforementioned limitations, we sincerely hope that our work can provide a stepping stone towards investigating and addressing these challenges.


\section*{Acknowledgments}

Both first authors, Selim Kuzucu and Jiaee Cheong, contributed equally to this work. 
This work was undertaken while Jiaee Cheong was a visiting PhD student at the Middle East Technical University (METU).
\noindent\textbf{Open access:} 
The authors have applied a Creative Commons Attribution (CC BY) licence to any Author Accepted Manuscript version arising.
\noindent\textbf{Data access:} 
This study involved secondary analyses of existing datasets. All datasets are described and cited accordingly. 
\noindent\textbf{Funding:} 
J. Cheong is supported by the Alan Turing Institute Doctoral Studentship and the Leverhulme Trust, and further acknowledges resource support from METU during her visiting studentship. H. Gunes is supported by the EPSRC/UKRI project ARoEQ under grant ref. EP/R030782/1. We gratefully acknowledge the computational resources provided by METU
Center for Robotics and Artificial Intelligence (METU-ROMER) and METU
Image Processing Laboratory.


\appendix


\vskip 0.2in

\begin{thebibliography}{}

\bibitem[\protect\BCAY{Abdar, Pourpanah, Hussain, Rezazadegan, Liu, Ghavamzadeh, Fieguth, Cao, Khosravi, Acharya, et~al.}{Abdar et~al.}{2021}]{abdar2021review}
Abdar, M., Pourpanah, F., Hussain, S., Rezazadegan, D., Liu, L., Ghavamzadeh, M., Fieguth, P., Cao, X., Khosravi, A., Acharya, U.~R., et~al. \BBOP2021\BBCP.
\newblock \BBOQ A review of uncertainty quantification in deep learning: Techniques, applications and challenges\BBCQ\
\newblock {\Bem Information fusion}, {\Bem 76}, 243--297.

\bibitem[\protect\BCAY{Angwin, Larson, Mattu,\ \BBA\ Kirchner}{Angwin et~al.}{2022}]{angwin2016machine}
Angwin, J., Larson, J., Mattu, S., \BBA\ Kirchner, L. \BBOP2022\BBCP.
\newblock \BBOQ Machine bias\BBCQ\
\newblock In {\Bem Ethics of data and analytics}, \BPGS\ 254--264. Auerbach Publications.

\bibitem[\protect\BCAY{Baltaci, Oksuz, Kuzucu, Tezoren, Konar, Ozkan, Akbas,\ \BBA\ Kalkan}{Baltaci et~al.}{2023}]{baltaci2023class}
Baltaci, Z.~S., Oksuz, K., Kuzucu, S., Tezoren, K., Konar, B.~K., Ozkan, A., Akbas, E., \BBA\ Kalkan, S. \BBOP2023\BBCP.
\newblock \BBOQ Class uncertainty: A measure to mitigate class imbalance\BBCQ\
\newblock In {\Bem arXiv preprint arXiv:2311.14090}.

\bibitem[\protect\BCAY{Barocas, Hardt,\ \BBA\ Narayanan}{Barocas et~al.}{2017}]{barocas2017fairness}
Barocas, S., Hardt, M., \BBA\ Narayanan, A. \BBOP2017\BBCP.
\newblock \BBOQ Fairness in machine learning\BBCQ\
\newblock {\Bem NeurIPS Tutorial}, {\Bem 1}, 2.

\bibitem[\protect\BCAY{Becker\ \BBA\ Kohavi}{Becker\ \BBA\ Kohavi}{1996}]{becker1996adult}
Becker, B.\BBACOMMA\  \BBA\ Kohavi, R. \BBOP1996\BBCP.
\newblock \BBOQ {Adult}\BBCQ\
\newblock UCI Machine Learning Repository.
\newblock {DOI}: https://doi.org/10.24432/C5XW20.

\bibitem[\protect\BCAY{Blundell, Cornebise, Kavukcuoglu,\ \BBA\ Wierstra}{Blundell et~al.}{2015}]{blundell2015weight}
Blundell, C., Cornebise, J., Kavukcuoglu, K., \BBA\ Wierstra, D. \BBOP2015\BBCP.
\newblock \BBOQ Weight uncertainty in neural networks\BBCQ\
\newblock In {\Bem International conference on machine learning}, \BPGS\ 1613--1622. PMLR.

\bibitem[\protect\BCAY{Buolamwini\ \BBA\ Gebru}{Buolamwini\ \BBA\ Gebru}{2018}]{buolamwini2018gender}
Buolamwini, J.\BBACOMMA\  \BBA\ Gebru, T. \BBOP2018\BBCP.
\newblock \BBOQ Gender shades: Intersectional accuracy disparities in commercial gender classification\BBCQ\
\newblock In {\Bem Conference on fairness, accountability and transparency}, \BPGS\ 77--91. PMLR.

\bibitem[\protect\BCAY{Castelnovo, Crupi, Greco, Regoli, Penco,\ \BBA\ Cosentini}{Castelnovo et~al.}{2022}]{castelnovo2022clarification}
Castelnovo, A., Crupi, R., Greco, G., Regoli, D., Penco, I.~G., \BBA\ Cosentini, A.~C. \BBOP2022\BBCP.
\newblock \BBOQ A clarification of the nuances in the fairness metrics landscape\BBCQ\
\newblock {\Bem Scientific Reports}, {\Bem 12\/}(1), 4209.

\bibitem[\protect\BCAY{Cetinkaya, Kalkan,\ \BBA\ Akbas}{Cetinkaya et~al.}{2024}]{cetinkaya2024ranked}
Cetinkaya, B., Kalkan, S., \BBA\ Akbas, E. \BBOP2024\BBCP.
\newblock \BBOQ Ranked: Addressing imbalance and uncertainty in edge detection using ranking-based losses\BBCQ\
\newblock In {\Bem Proceedings of the IEEE/CVF Conference on Computer Vision and Pattern Recognition}, \BPGS\ 3239--3249.

\bibitem[\protect\BCAY{Chen, Raab, Wang,\ \BBA\ Liu}{Chen et~al.}{2022}]{chen2022fairness}
Chen, Y., Raab, R., Wang, J., \BBA\ Liu, Y. \BBOP2022\BBCP.
\newblock \BBOQ Fairness transferability subject to bounded distribution shift\BBCQ\
\newblock {\Bem Advances in Neural Information Processing Systems}, {\Bem 35}, 11266--11278.

\bibitem[\protect\BCAY{Chen\ \BBA\ Joo}{Chen\ \BBA\ Joo}{2021}]{chen2021understanding}
Chen, Y.\BBACOMMA\  \BBA\ Joo, J. \BBOP2021\BBCP.
\newblock \BBOQ Understanding and mitigating annotation bias in facial expression recognition\BBCQ\
\newblock In {\Bem Proceedings of the IEEE/CVF International Conference on Computer Vision}, \BPGS\ 14980--14991.

\bibitem[\protect\BCAY{Cheong, Kalkan,\ \BBA\ Gunes}{Cheong et~al.}{2021}]{cheong2021hitchhiker}
Cheong, J., Kalkan, S., \BBA\ Gunes, H. \BBOP2021\BBCP.
\newblock \BBOQ The hitchhiker’s guide to bias and fairness in facial affective signal processing: Overview and techniques\BBCQ\
\newblock {\Bem IEEE Signal Processing Magazine}, {\Bem 38\/}(6), 39--49.

\bibitem[\protect\BCAY{Cheong, Kalkan,\ \BBA\ Gunes}{Cheong et~al.}{2022}]{cheong2022counterfactual}
Cheong, J., Kalkan, S., \BBA\ Gunes, H. \BBOP2022\BBCP.
\newblock \BBOQ Counterfactual fairness for facial expression recognition\BBCQ\
\newblock In {\Bem European Conference on Computer Vision}, \BPGS\ 245--261. Springer.

\bibitem[\protect\BCAY{Cheong, Kalkan,\ \BBA\ Gunes}{Cheong et~al.}{2023}]{cheong2023causal}
Cheong, J., Kalkan, S., \BBA\ Gunes, H. \BBOP2023\BBCP.
\newblock \BBOQ Causal structure learning of bias for fair affect recognition\BBCQ\
\newblock In {\Bem Proceedings of the IEEE/CVF Winter Conference on Applications of Computer Vision}, \BPGS\ 340--349.

\bibitem[\protect\BCAY{Cheong, Kalkan,\ \BBA\ Gunes}{Cheong et~al.}{2024}]{cheong2024fairrefuse}
Cheong, J., Kalkan, S., \BBA\ Gunes, H. \BBOP2024\BBCP.
\newblock \BBOQ Fairrefuse: Referee-guided fusion for multi-modal causal fairness in depression detection\BBCQ\
\newblock In {\Bem International Joint Conference on Artificial Intelligence (IJCAI)}.

\bibitem[\protect\BCAY{Cheong, Kuzucu, Kalkan,\ \BBA\ Gunes}{Cheong et~al.}{2023}]{cheong2023towards}
Cheong, J., Kuzucu, S., Kalkan, S., \BBA\ Gunes, H. \BBOP2023\BBCP.
\newblock \BBOQ Towards gender fairness for mental health prediction\BBCQ\
\newblock In {\Bem 32nd Int. Joint Conf. on Artificial Intelligence (IJCAI)}.

\bibitem[\protect\BCAY{Chouldechova}{Chouldechova}{2017}]{chouldechova2017fair}
Chouldechova, A. \BBOP2017\BBCP.
\newblock \BBOQ Fair prediction with disparate impact: A study of bias in recidivism prediction instruments\BBCQ\
\newblock {\Bem Big data}, {\Bem 5\/}(2), 153--163.

\bibitem[\protect\BCAY{Ding, Hardt, Miller,\ \BBA\ Schmidt}{Ding et~al.}{2021}]{ding2021retiring}
Ding, F., Hardt, M., Miller, J., \BBA\ Schmidt, L. \BBOP2021\BBCP.
\newblock \BBOQ Retiring adult: New datasets for fair machine learning\BBCQ\
\newblock {\Bem Advances in neural information processing systems}, {\Bem 34}, 6478--6490.

\bibitem[\protect\BCAY{Domnich\ \BBA\ Anbarjafari}{Domnich\ \BBA\ Anbarjafari}{2021}]{domnich2021responsible}
Domnich, A.\BBACOMMA\  \BBA\ Anbarjafari, G. \BBOP2021\BBCP.
\newblock \BBOQ Responsible ai: Gender bias assessment in emotion recognition\BBCQ.

\bibitem[\protect\BCAY{Dressel\ \BBA\ Farid}{Dressel\ \BBA\ Farid}{2018}]{dressel2018accuracy}
Dressel, J.\BBACOMMA\  \BBA\ Farid, H. \BBOP2018\BBCP.
\newblock \BBOQ The accuracy, fairness, and limits of predicting recidivism\BBCQ\
\newblock {\Bem Science advances}, {\Bem 4\/}(1), eaao5580.

\bibitem[\protect\BCAY{Dwork, Hardt, Pitassi, Reingold,\ \BBA\ Zemel}{Dwork et~al.}{2012}]{dwork2012fairness}
Dwork, C., Hardt, M., Pitassi, T., Reingold, O., \BBA\ Zemel, R. \BBOP2012\BBCP.
\newblock \BBOQ Fairness through awareness\BBCQ\
\newblock In {\Bem Proceedings of the 3rd innovations in theoretical computer science conference}, \BPGS\ 214--226.

\bibitem[\protect\BCAY{Ethayarajh}{Ethayarajh}{2020}]{ethayarajh2020your}
Ethayarajh, K. \BBOP2020\BBCP.
\newblock \BBOQ Is your classifier actually biased? measuring fairness under uncertainty with bernstein bounds\BBCQ\
\newblock In {\Bem Proceedings of the 58th Annual Meeting of the Association for Computational Linguistics}, \BPGS\ 2914--2919.

\bibitem[\protect\BCAY{Feldman, Friedler, Moeller, Scheidegger,\ \BBA\ Venkatasubramanian}{Feldman et~al.}{2015}]{feldman2015certifying}
Feldman, M., Friedler, S.~A., Moeller, J., Scheidegger, C., \BBA\ Venkatasubramanian, S. \BBOP2015\BBCP.
\newblock \BBOQ Certifying and removing disparate impact\BBCQ\
\newblock In {\Bem proceedings of the 21th ACM SIGKDD international conference on knowledge discovery and data mining}, \BPGS\ 259--268.

\bibitem[\protect\BCAY{Gal et~al.}{Gal et~al.}{2016}]{gal2016uncertainty}
Gal, Y.\BBACOMMA\  et~al. \BBOP2016\BBCP.
\newblock {\Bem Uncertainty in deep learning}.
\newblock Ph.D.\ thesis, University of Cambridge.

\bibitem[\protect\BCAY{Gal\ \BBA\ Ghahramani}{Gal\ \BBA\ Ghahramani}{2016}]{gal2016dropout}
Gal, Y.\BBACOMMA\  \BBA\ Ghahramani, Z. \BBOP2016\BBCP.
\newblock \BBOQ Dropout as a bayesian approximation: Representing model uncertainty in deep learning\BBCQ.

\bibitem[\protect\BCAY{Garg, Villasenor,\ \BBA\ Foggo}{Garg et~al.}{2020}]{garg2020fairness}
Garg, P., Villasenor, J., \BBA\ Foggo, V. \BBOP2020\BBCP.
\newblock \BBOQ Fairness metrics: A comparative analysis\BBCQ\
\newblock In {\Bem IEEE International Conference on Big Data (Big Data)}, \BPGS\ 3662--3666. IEEE.

\bibitem[\protect\BCAY{Gawlikowski, Tassi, Ali, Lee, Humt, Feng, Kruspe, Triebel, Jung, Roscher, et~al.}{Gawlikowski et~al.}{2021}]{gawlikowski2021survey}
Gawlikowski, J., Tassi, C. R.~N., Ali, M., Lee, J., Humt, M., Feng, J., Kruspe, A., Triebel, R., Jung, P., Roscher, R., et~al. \BBOP2021\BBCP.
\newblock \BBOQ A survey of uncertainty in deep neural networks\BBCQ.

\bibitem[\protect\BCAY{Goel, Amayuelas, Deshpande,\ \BBA\ Sharma}{Goel et~al.}{2021}]{goel2021importance}
Goel, N., Amayuelas, A., Deshpande, A., \BBA\ Sharma, A. \BBOP2021\BBCP.
\newblock \BBOQ The importance of modeling data missingness in algorithmic fairness: A causal perspective\BBCQ\
\newblock In {\Bem Proceedings of the AAAI Conference on Artificial Intelligence}, \BPGS\ 7564--7573.

\bibitem[\protect\BCAY{Guo, Pleiss, Sun,\ \BBA\ Weinberger}{Guo et~al.}{2017}]{guo2017calibration}
Guo, C., Pleiss, G., Sun, Y., \BBA\ Weinberger, K.~Q. \BBOP2017\BBCP.
\newblock \BBOQ On calibration of modern neural networks\BBCQ\
\newblock In {\Bem International conference on machine learning}, \BPGS\ 1321--1330. PMLR.

\bibitem[\protect\BCAY{Hardt, Price,\ \BBA\ Srebro}{Hardt et~al.}{2016}]{hardt2016equality}
Hardt, M., Price, E., \BBA\ Srebro, N. \BBOP2016\BBCP.
\newblock \BBOQ Equality of opportunity in supervised learning\BBCQ\
\newblock {\Bem Advances in neural information processing systems}, {\Bem 29}.

\bibitem[\protect\BCAY{Havasi, Jenatton, Fort, Liu, Snoek, Lakshminarayanan, Dai,\ \BBA\ Tran}{Havasi et~al.}{2020}]{havasi2020training}
Havasi, M., Jenatton, R., Fort, S., Liu, J.~Z., Snoek, J., Lakshminarayanan, B., Dai, A.~M., \BBA\ Tran, D. \BBOP2020\BBCP.
\newblock \BBOQ Training independent subnetworks for robust prediction\BBCQ\
\newblock In {\Bem International Conference on Learning Representations}.

\bibitem[\protect\BCAY{Hort, Chen, Zhang, Harman,\ \BBA\ Sarro}{Hort et~al.}{2023}]{hort2022bia}
Hort, M., Chen, Z., Zhang, J.~M., Harman, M., \BBA\ Sarro, F. \BBOP2023\BBCP.
\newblock \BBOQ Bias mitigation for machine learning classifiers: A comprehensive survey\BBCQ\
\newblock In {\Bem ACM J. Responsib. Comput.}, New York, NY, USA. Association for Computing Machinery.

\bibitem[\protect\BCAY{Jiang\ \BBA\ Nachum}{Jiang\ \BBA\ Nachum}{2020}]{jiang2020identifying}
Jiang, H.\BBACOMMA\  \BBA\ Nachum, O. \BBOP2020\BBCP.
\newblock \BBOQ Identifying and correcting label bias in machine learning\BBCQ\
\newblock In {\Bem International Conference on Artificial Intelligence and Statistics}, \BPGS\ 702--712. PMLR.

\bibitem[\protect\BCAY{Kaiser, Kern,\ \BBA\ Rügamer}{Kaiser et~al.}{2022}]{kaiser2022uncertainty}
Kaiser, P., Kern, C., \BBA\ Rügamer, D. \BBOP2022\BBCP.
\newblock \BBOQ Uncertainty-aware predictive modeling for fair data-driven decisions\BBCQ.

\bibitem[\protect\BCAY{Kang, Li, Weber, Liu, Zhang,\ \BBA\ Li}{Kang et~al.}{2022}]{kang2022certifying}
Kang, M., Li, L., Weber, M., Liu, Y., Zhang, C., \BBA\ Li, B. \BBOP2022\BBCP.
\newblock \BBOQ Certifying some distributional fairness with subpopulation decomposition\BBCQ\
\newblock {\Bem Advances in Neural Information Processing Systems}, {\Bem 35}, 31045--31058.

\bibitem[\protect\BCAY{Kearns, Neel, Roth,\ \BBA\ Wu}{Kearns et~al.}{2018}]{kearns2018preventing}
Kearns, M., Neel, S., Roth, A., \BBA\ Wu, Z.~S. \BBOP2018\BBCP.
\newblock \BBOQ Preventing fairness gerrymandering: Auditing and learning for subgroup fairness\BBCQ\
\newblock In {\Bem International conference on machine learning}, \BPGS\ 2564--2572. PMLR.

\bibitem[\protect\BCAY{Kendall\ \BBA\ Gal}{Kendall\ \BBA\ Gal}{2017}]{kendall2017uncertainties}
Kendall, A.\BBACOMMA\  \BBA\ Gal, Y. \BBOP2017\BBCP.
\newblock \BBOQ What uncertainties do we need in bayesian deep learning for computer vision?\BBCQ\
\newblock {\Bem CoRR}, {\Bem abs/1703.04977}.

\bibitem[\protect\BCAY{Kingma\ \BBA\ Ba}{Kingma\ \BBA\ Ba}{2017}]{kingma2017adam}
Kingma, D.~P.\BBACOMMA\  \BBA\ Ba, J. \BBOP2017\BBCP.
\newblock \BBOQ Adam: A method for stochastic optimization\BBCQ.

\bibitem[\protect\BCAY{Kullback\ \BBA\ Leibler}{Kullback\ \BBA\ Leibler}{1951}]{kullback1951information}
Kullback, S.\BBACOMMA\  \BBA\ Leibler, R.~A. \BBOP1951\BBCP.
\newblock \BBOQ On information and sufficiency\BBCQ\
\newblock {\Bem The annals of mathematical statistics}, {\Bem 22\/}(1), 79--86.

\bibitem[\protect\BCAY{Kusner, Loftus, Russell,\ \BBA\ Silva}{Kusner et~al.}{2017}]{kusner2017counterfactual}
Kusner, M.~J., Loftus, J., Russell, C., \BBA\ Silva, R. \BBOP2017\BBCP.
\newblock \BBOQ Counterfactual fairness\BBCQ\
\newblock {\Bem Advances in neural information processing systems}, {\Bem 30}.

\bibitem[\protect\BCAY{Kwon, Won, Kim,\ \BBA\ Paik}{Kwon et~al.}{2020}]{kwon2020uncertainty}
Kwon, Y., Won, J.-H., Kim, B.~J., \BBA\ Paik, M.~C. \BBOP2020\BBCP.
\newblock \BBOQ Uncertainty quantification using bayesian neural networks in classification: Application to biomedical image segmentation\BBCQ\
\newblock {\Bem Computational Statistics \& Data Analysis}, {\Bem 142}, 106816.

\bibitem[\protect\BCAY{Lakshminarayanan, Pritzel,\ \BBA\ Blundell}{Lakshminarayanan et~al.}{2017}]{lakshminarayanan2017simple}
Lakshminarayanan, B., Pritzel, A., \BBA\ Blundell, C. \BBOP2017\BBCP.
\newblock \BBOQ Simple and scalable predictive uncertainty estimation using deep ensembles\BBCQ.

\bibitem[\protect\BCAY{Liu, Lin, Padhy, Tran, Bedrax{-}Weiss,\ \BBA\ Lakshminarayanan}{Liu et~al.}{2020}]{liu2020sngp}
Liu, J.~Z., Lin, Z., Padhy, S., Tran, D., Bedrax{-}Weiss, T., \BBA\ Lakshminarayanan, B. \BBOP2020\BBCP.
\newblock \BBOQ Simple and principled uncertainty estimation with deterministic deep learning via distance awareness\BBCQ\
\newblock {\Bem CoRR}, {\Bem abs/2006.10108}.

\bibitem[\protect\BCAY{MacKay}{MacKay}{1992}]{mackay1992practical}
MacKay, D.~J. \BBOP1992\BBCP.
\newblock \BBOQ A practical bayesian framework for backpropagation networks\BBCQ\
\newblock {\Bem Neural computation}, {\Bem 4\/}(3), 448--472.

\bibitem[\protect\BCAY{Mehrabi, Morstatter, Saxena, Lerman,\ \BBA\ Galstyan}{Mehrabi et~al.}{2021}]{mehrabi2021survey}
Mehrabi, N., Morstatter, F., Saxena, N., Lerman, K., \BBA\ Galstyan, A. \BBOP2021\BBCP.
\newblock \BBOQ A survey on bias and fairness in machine learning\BBCQ\
\newblock {\Bem ACM computing surveys (CSUR)}, {\Bem 54\/}(6), 1--35.

\bibitem[\protect\BCAY{Mehta, Shui,\ \BBA\ Arbel}{Mehta et~al.}{2023}]{mehta2023evaluating}
Mehta, R., Shui, C., \BBA\ Arbel, T. \BBOP2023\BBCP.
\newblock \BBOQ Evaluating the fairness of deep learning uncertainty estimates in medical image analysis\BBCQ.

\bibitem[\protect\BCAY{Mukherjee, Yurochkin, Banerjee,\ \BBA\ Sun}{Mukherjee et~al.}{2020}]{mukherjee2020two}
Mukherjee, D., Yurochkin, M., Banerjee, M., \BBA\ Sun, Y. \BBOP2020\BBCP.
\newblock \BBOQ Two simple ways to learn individual fairness metrics from data\BBCQ\
\newblock In {\Bem International Conference on Machine Learning}, \BPGS\ 7097--7107. PMLR.

\bibitem[\protect\BCAY{Mukhoti, Kirsch, van Amersfoort, Torr,\ \BBA\ Gal}{Mukhoti et~al.}{2023}]{mukhoti2023ddu}
Mukhoti, J., Kirsch, A., van Amersfoort, J., Torr, P.~H., \BBA\ Gal, Y. \BBOP2023\BBCP.
\newblock \BBOQ Deep deterministic uncertainty: A new simple baseline\BBCQ\
\newblock In {\Bem Proceedings of the IEEE/CVF Conference on Computer Vision and Pattern Recognition (CVPR)}, \BPGS\ 24384--24394.

\bibitem[\protect\BCAY{Mukhoti, Kulharia, Sanyal, Golodetz, Torr,\ \BBA\ Dokania}{Mukhoti et~al.}{2020}]{FocalLoss_Calibration}
Mukhoti, J., Kulharia, V., Sanyal, A., Golodetz, S., Torr, P., \BBA\ Dokania, P. \BBOP2020\BBCP.
\newblock \BBOQ Calibrating deep neural networks using focal loss\BBCQ\
\newblock In Larochelle, H., Ranzato, M., Hadsell, R., Balcan, M., \BBA\ Lin, H.\BEDS, {\Bem Advances in Neural Information Processing Systems}, \lowercase{\BVOL}~33, \BPGS\ 15288--15299. Curran Associates, Inc.

\bibitem[\protect\BCAY{Neal}{Neal}{1995}]{neal1995bayesian}
Neal, R.~M. \BBOP1995\BBCP.
\newblock {\Bem Bayesian Learning for Neural Networks}.
\newblock Ph.D.\ thesis, University of Toronto.

\bibitem[\protect\BCAY{Roy\ \BBA\ Mohapatra}{Roy\ \BBA\ Mohapatra}{2023}]{roy2023fairness}
Roy, A.\BBACOMMA\  \BBA\ Mohapatra, P. \BBOP2023\BBCP.
\newblock \BBOQ Fairness uncertainty quantification: How certain are you that the model is fair?\BBCQ.

\bibitem[\protect\BCAY{Rudin}{Rudin}{2019}]{rudin2019stop}
Rudin, C. \BBOP2019\BBCP.
\newblock \BBOQ Stop explaining black box machine learning models for high stakes decisions and use interpretable models instead\BBCQ\
\newblock {\Bem Nature machine intelligence}, {\Bem 1\/}(5), 206--215.

\bibitem[\protect\BCAY{Shridhar, Laumann,\ \BBA\ Liwicki}{Shridhar et~al.}{2019}]{shridhar2019bcnn}
Shridhar, K., Laumann, F., \BBA\ Liwicki, M. \BBOP2019\BBCP.
\newblock \BBOQ A comprehensive guide to bayesian convolutional neural network with variational inference\BBCQ\
\newblock {\Bem CoRR}, {\Bem abs/1901.02731}.

\bibitem[\protect\BCAY{Tahir, Cheng,\ \BBA\ Liu}{Tahir et~al.}{2023}]{tahir2023fairness}
Tahir, A., Cheng, L., \BBA\ Liu, H. \BBOP2023\BBCP.
\newblock \BBOQ Fairness through aleatoric uncertainty\BBCQ.

\bibitem[\protect\BCAY{van Amersfoort, Smith, Teh,\ \BBA\ Gal}{van Amersfoort et~al.}{2020a}]{amersfoort2020duq}
van Amersfoort, J., Smith, L., Teh, Y.~W., \BBA\ Gal, Y. \BBOP2020a\BBCP.
\newblock \BBOQ Simple and scalable epistemic uncertainty estimation using a single deep deterministic neural network\BBCQ\
\newblock {\Bem CoRR}, {\Bem abs/2003.02037}.

\bibitem[\protect\BCAY{Van~Amersfoort, Smith, Teh,\ \BBA\ Gal}{Van~Amersfoort et~al.}{2020b}]{van2020duq}
Van~Amersfoort, J., Smith, L., Teh, Y.~W., \BBA\ Gal, Y. \BBOP2020b\BBCP.
\newblock \BBOQ Uncertainty estimation using a single deep deterministic neural network\BBCQ\
\newblock In {\Bem International conference on machine learning}, \BPGS\ 9690--9700. PMLR.

\bibitem[\protect\BCAY{Verma\ \BBA\ Rubin}{Verma\ \BBA\ Rubin}{2018a}]{verma2018fairness}
Verma, S.\BBACOMMA\  \BBA\ Rubin, J. \BBOP2018a\BBCP.
\newblock \BBOQ Fairness definitions explained\BBCQ\
\newblock In {\Bem Proceedings of the international workshop on software fairness}, \BPGS\ 1--7.

\bibitem[\protect\BCAY{Verma\ \BBA\ Rubin}{Verma\ \BBA\ Rubin}{2018b}]{verma2018definitions}
Verma, S.\BBACOMMA\  \BBA\ Rubin, J. \BBOP2018b\BBCP.
\newblock \BBOQ Fairness definitions explained\BBCQ\
\newblock In {\Bem Proceedings of the International Workshop on Software Fairness}, FairWare '18, \BPG\ 1–7, New York, NY, USA. Association for Computing Machinery.

\bibitem[\protect\BCAY{Wang, He, Gao,\ \BBA\ Calmon}{Wang et~al.}{2023}]{wang2023aleatoric}
Wang, H., He, L., Gao, R., \BBA\ Calmon, F. \BBOP2023\BBCP.
\newblock \BBOQ Aleatoric and epistemic discrimination: Fundamental limits of fairness interventions\BBCQ\
\newblock In {\Bem Thirty-seventh Conference on Neural Information Processing Systems}.

\bibitem[\protect\BCAY{Wang, Liu,\ \BBA\ Levy}{Wang et~al.}{2021}]{wang2021fair}
Wang, J., Liu, Y., \BBA\ Levy, C. \BBOP2021\BBCP.
\newblock \BBOQ Fair classification with group-dependent label noise\BBCQ\
\newblock In {\Bem Proceedings of the 2021 ACM conference on fairness, accountability, and transparency}, \BPGS\ 526--536.

\bibitem[\protect\BCAY{Xu, White, Kalkan,\ \BBA\ Gunes}{Xu et~al.}{2020}]{xu2020investigating}
Xu, T., White, J., Kalkan, S., \BBA\ Gunes, H. \BBOP2020\BBCP.
\newblock \BBOQ Investigating bias and fairness in facial expression recognition\BBCQ\
\newblock In {\Bem Computer Vision--ECCV 2020 Workshops: Glasgow, UK, August 23--28, 2020, Proceedings, Part VI 16}, \BPGS\ 506--523. Springer.

\bibitem[\protect\BCAY{Yoon, Kang, Kim,\ \BBA\ Han}{Yoon et~al.}{2022}]{yoon2022dvlog}
Yoon, J., Kang, C., Kim, S., \BBA\ Han, J. \BBOP2022\BBCP.
\newblock \BBOQ D-vlog: Multimodal vlog dataset for depression detection\BBCQ\
\newblock {\Bem Proceedings of the AAAI Conference on Artificial Intelligence}, {\Bem 36\/}(11), 12226--12234.

\bibitem[\protect\BCAY{Zafar, Valera, Gomez~Rodriguez,\ \BBA\ Gummadi}{Zafar et~al.}{2017}]{zafar2017fairness}
Zafar, M.~B., Valera, I., Gomez~Rodriguez, M., \BBA\ Gummadi, K.~P. \BBOP2017\BBCP.
\newblock \BBOQ Fairness beyond disparate treatment \& disparate impact: Learning classification without disparate mistreatment\BBCQ\
\newblock In {\Bem Proceedings of the 26th international conference on world wide web}, \BPGS\ 1171--1180.

\bibitem[\protect\BCAY{Zanna, Sridhar, Yu,\ \BBA\ Sano}{Zanna et~al.}{2022}]{zanna2022bias}
Zanna, K., Sridhar, K., Yu, H., \BBA\ Sano, A. \BBOP2022\BBCP.
\newblock \BBOQ Bias reducing multitask learning on mental health prediction\BBCQ.

\bibitem[\protect\BCAY{Zemel, Wu, Swersky, Pitassi,\ \BBA\ Dwork}{Zemel et~al.}{2013}]{zemel2013learning}
Zemel, R., Wu, Y., Swersky, K., Pitassi, T., \BBA\ Dwork, C. \BBOP2013\BBCP.
\newblock \BBOQ Learning fair representations\BBCQ\
\newblock In {\Bem International conference on machine learning}, \BPGS\ 325--333. PMLR.

\end{thebibliography}
\bibliographystyle{theapa}

\end{document}